\theoremstyle{plain}
\newtheorem{thm}{Theorem}[section]
\newtheorem{lemma}[thm]{Lemma}
\theoremstyle{definition}
\newtheorem{remark}{Remark}
\newtheorem{assumption}{Assumption}
\newtheorem{theorem}{Theorem}
\DeclareMathOperator{\var}{Var} \DeclareMathOperator{\cov}{Cov}
\DeclareMathOperator{\tr}{tr} 
 \DeclareMathOperator{\diag}{diag}
\renewcommand{\(}{\left(}
\renewcommand{\)}{\right)}
\def\phi{\varphi}
\numberwithin{equation}{section} \numberwithin{thm}{section}
\def\fx {{\mathbf x}}
\def\fZ {{\mathbf Z}}
\def\fz {{\mathbf z}}
\def\fH {{\mathbf H}}
\def\fv {{\mathbf v}}
\def\fu {{\mathbf u}}
\def\fS {{\mathbf S}}
\def\fN {{\mathbf N}}
\def\fC {{\mathbf C}}
\def\fI {{\mathbf I}}
\def\fQ {{\mathbf Q}}
\def\fzero {{\mathbf 0}}
\def\fy {{\mathbf y}}
\newcommand{\bm}[1]{\mbox{\boldmath{$#1$}}}
\newcommand{\bmu}{{\bm \mu}}
\newcommand{\bSig}{{\bm \Sigma}}
\newcommand{\bom}{{\bm \omega}}
\renewcommand{\theenumi}{\arabic{enumi}}
\renewcommand{\p@enumii}{\theenumi.}
\newcommand{\beq}{\begin{eqnarray}}
\newcommand{\eeq}{\end{eqnarray}}
\newcommand{\beqq}{\begin{eqnarray*}}
	\newcommand{\eeqq}{\end{eqnarray*}}
\begin{document}

\title{Spectrally-Corrected and Regularized QDA Classifier for Spiked Covariance Model}

        
\author{Wenya Luo, Hua Li, Zhidong Bai, Zhijun Liu
       

\thanks{Wenya Luo is with the School of Data Sciences, Zhejiang University of Finance and Economics, 18 Xueyuan Street, Xiasha Higher Education Zone, Hangzhou, Zhejiang, China}
\thanks{Hua Li is with the School of Science, Chang chun University, 6543 Weixing Road, Changchun, Jilin, China}
\thanks{Zhidong Bai is with the KLAS MOE and School of Mathematics and Statistics, Northeast Normal University, 5268 Renmin Street, Changchun, Jilin, China}
\thanks{Zhijun Liu is with     Northeast Normal University, 3-11 Wenhua Road, Heping District, Shenyang, Liaoning, China}}

\markboth{Journal of \LaTeX\ Class Files,~Vol.~1, No.~2, December~2023}%
{Shell \MakeLowercase{\textit{et al.}}: A Sample Article Using IEEEtran.cls for IEEE Journals}

\IEEEpubid{0000--0000~\copyright~2023 IEEE}

\maketitle

\begin{abstract}
Quadratic discriminant analysis (QDA) is a widely used method for classification problems, particularly preferable over Linear Discriminant Analysis (LDA) for heterogeneous data. However, QDA loses its effectiveness in high-dimensional settings, where the data dimension and sample size tend to infinity. To address this issue, we propose a novel QDA method utilizing spectral correction and regularization techniques, termed SR-QDA. The regularization parameters in our method are selected by maximizing the Fisher-discriminant ratio. We compare SR-QDA with QDA, regularized quadratic discriminant analysis (R-QDA), and several other competitors. The results indicate that SR-QDA performs exceptionally well, especially in moderate and high-dimensional situations. Empirical experiments across diverse datasets further support this conclusion.
\end{abstract}

\begin{IEEEkeywords}
Classification,
High-dimensional statistics,
Quadratic discriminnat analysis,
Random matrix theory,
Spiked model.
\end{IEEEkeywords}

\section{Introduction}\label{sec:1}

\IEEEPARstart{C}{lassification} is a fundamental problem in multivariate statistics and machine learning, with applications ranging from bioinformatics to finance. Quadratic discriminant analysis (QDA) is a popular classification method that allows for distinct covariance matrice between classes, offering greater flexibility than its linear counterpart, Linear discriminant analysis (LDA). Formally, in the ideal setting of two known Gaussian distributions $\fN_p(\bmu_0, \bSig_0)$ (class 0) and $\fN_p(\bmu_1, \bSig_1)$ (class 1), the goal is to classify a new observation $x$, drawn from one of the two distributions with prior probabilities $\pi_0$ and $\pi_1$, into one of the two classes. More details can be found in Anderson \cite{0An}. QDA is powerful in traditional fixed-dimension settings, however, its performance deteriorates significantly in high-dimensional environments where the dimension $p$ is comparable to or exceeds the sample size $n$.


The primary reason of the unsatisfactory performance of QDA for large $p$ is the challenging task of estimating $\bSig_0^{-1}$ and $\bSig_1^{-1}$ in high dimensions, which is a key step in constructing the QDA discriminate function.
To improve the performance of QDA for high dimension, various high-dimensional extensions have been proposed. One strategy often used is to adding some sparsity assumptions. For example, Li and Shao \cite{li2015sparse} assumed that $\bmu_0-\bmu_1$, $\bSig_0$, $\bSig_1$ and $\bSig_0-\bSig_1$ have sparse structures, constructed sparse estimators for these parameters, and plugged them into the QDA discriminate function; Fan et al. \cite{fan2015innovated} proposed a two-stage quadratic discriminant analysis by combining the innovated interaction screening method under assumption that $\bSig_0^{-1}-\bSig_1^{-1}$ is sparse. Similar works include Jiang et al. \cite{jiang2018direct}, Cai and Zhang \cite{cai2021convex} and Gaynanova and Wang \cite{gaynanova2019sparse}. Another common strategy for studying high-dimensional QDA is to apply shrinkage techniques to stabilize estimations of covariance matrices in high dimensions, such as the regularized QDA (R-QDA) proposed by Friedman \cite{friedman1989regularized} and Remey et al. \cite{ramey2016high}. Other related works include Wu et al. \cite{wu2019quadratic}, Wu and Hao \cite{wu2022quadratic} et al. These methods have shown improvements, but they still struggle to some extent with overfitting and the curse of dimensionality when the data does not exhibit obvious sparsity.

Recent advances in large-dimensional random matrix theory (LRMT) provide a promising direction for improving QDA in high dimensions, since it offers various theoretical and methodological support for improving the estimation of population covariance matrices under high dimension situation.

The unsatisfactory performance of QDA for large $p$ is primarily due to the challenging task of estimating $\bSig_0^{-1}$ and $\bSig_1^{-1}$ in high dimensions. Large dimensional random matrix theory (LRMT) provides matured theoretical and methodological support for this task, offering numerous limit theories and statistical methods related to random matrices under the large-dimensional framework, such as \cite{Bai04, bai2012estimation, Baik06, 2011Random}. In recent years, linear and nonlinear classification methods constructed based on LRMT have emerged as a significant research direction. For instance, Houssem et al. \cite{sifaou2020high}, Li et al. \cite{li2022spectrally} and Houssem et al. \cite{Sifaou2020HighDimensionalQD} developed high-dimensional LDA and QDA, respectively, under spiked covariance model, which assumed all eigenvalues of the population covariance matrices are identical except for a finite number. This model, popular in LRMT and described by Johnstone \cite{johnstone2001distribution}, has been applied in various fields, including detection speech recognition \cite{hastie1995penalized, johnstone2001distribution}, mathematical financial \cite{laloux2000random, malevergne210115collective, plerou2002random, kritchman2008determining, passemier2017estimation}, wireless communication \cite{telatar1999capacity}, physics of mixture \cite{sear2003instabilities}, EGG signals\cite{davidson2009functional, fazli2011?1} and statistical learning \cite{hoyle2003limiting}.

In this paper, we assume that the population covariance matrix associated with each class follows a spiked model and propose a spectrally-corrected and regularized QDA (SR-QDA) under the large-dimensional framework. This approach integrates design ideas from the sample spectrally-corrected covariance matrix \cite{li2022spectrally} and the regularized technique \cite{friedman1989regularized}. Specifically, leveraging some theoretical and applied results of the spiked covariance matrix, we use the sample eigenvectors of the extreme sample eigenvalues to provide the directions of the low-rank perturbation, estimate the spiked eigenvalues with the consistent estimator, and regularize them by some common parameters. In this way, compared with Houssem et al. \cite{Sifaou2020HighDimensionalQD}, our method not only retains the spiked structure as much as possible but also reduces the number of parameters to be optimized. We select the optimal regularization parameters by maximizing the asymptotic fisher ratio, which we calculate using tools from LRMT as the sample size and dimension approach infinity at the same rate. Additionally, we present a new estimation for noise variance, provide the asymptotic theory of the estimator in the case of large dimensions, and plug it into our SR-QDA. This further improves the sensitivity of our method to the variance difference between classes, thereby enhancing its classification accuracy. Finally, we compare SR-QDA classifier with other recognized classification techniques, such as QDA, R-QDA, Im-QDA from Houssem et al. \cite{Sifaou2020HighDimensionalQD}, support vector machine (SVM) and k-nearest neighbor (KNN) through some simulation and empirical experiments.

The remainder of this paper is organized as follows: Section~\ref{sec:2} introduces a brief overview of QDA, R-QDA and the basic form of the SR-QDA classifier. Section~\ref{sec:3} present the limit theory of fisher ratio, give a parameter optimization method and an estimation method for variance noise. Section~\ref{sec:4} compares the classification results through simulated data and empirical data respectively. The final section~\ref{sec:5} gives some conclusions.

\section{Methodology}\label{sec:2}

\subsection{QDA and R-QDA}
Quadratic Discriminant Analysis (QDA) is a traditional classification method in multivariate statistics and machine learning. Unlike LDA, which requires all classes share a common covariance matrix, QDA allows each class to have its own covariance matrix, and this flexibility allows it to handle more complex problems. Suppose we have $n=n_0+n_1$ random samples collected from $\pi_0\fN_p(\bmu_0, \bSig_0)+\pi_1\fN_p(\bmu_1, \bSig_1)$, forming two sets $\fC_0=\{\fx_1, \dots, \fx_{n_0}\}$ and $\fC_1=\{\fx_{n_0+1}, \dots, \fx_{n}\}$, satisfying that $\fx_l\in{\fC_i}, i\in{\{0,1\}}$, is drawn from $\fN_p(\bmu_i, \bSig_i)$. Let $\Gamma_i$ denotes the set of indexes of the observations in $\fC_i$. The discriminant function of QDA is given by:
\begin{equation}\label{QDA}
W^{QDA}(\fx)=\eta^{QDA}-\frac{1}{2}(\fx-\bmu_0)^T\bSig_0^{-1}(\fx-\bmu_0)+\frac{1}{2}(\fx-\bmu_1)^T\bSig_1^{-1}(\fx-\bmu_1),
\end{equation}
where $\eta^{QDA}=-\frac{1}{2}\log{\frac{|\bSig_0|}{|\bSig_1|}}-\log{\frac{\pi_1}{\pi_0}}$ and $\pi_i$ is the prior probability for each class. An observation $\fx$ is classified to $\fC_0$ if the discriminant function $W^{QDA}(\fx)$ is positive and to class $\fC_1$ otherwise. In practice, the mean vector $\bmu_i$ and the covariance matrice $\bSig_i$ are unknown and are usually replaced by the following estimators respectively:
$$\bar{\fx}_i=\frac{1}{n_i}\sum\limits_{\ell\in{\Gamma}_i}\fx_{\ell}, \quad \hat{\bSig}_i=\frac{1}{n_i-1}\sum\limits_{\ell\in{\Gamma}_i}(\fx_{\ell}-\bar{\fx}_i)(\fx_{\ell}-\bar{\fx}_i)^T.$$
In the classical asymptotic theory ($p$ fixed, $n\rightarrow \infty$), QDA performs well because the sample covariance matrix is a consistent estimator of the population one. However, its utility diminishes when the dimension approaches the order of magnitude as sample size ($n\rightarrow \infty$, $p\rightarrow \infty$, and $p/n \rightarrow J>0$). This is mainly due to the sample covariance matrix diverging from the population one severely. To overcome this issue, Friedman \cite{friedman1989regularized}, \cite{friedman2001elements} and \cite{zollanvari2015generalized} used ridge esitmators below:
\begin{equation}
\mathbf{H}_i=\left(\mathbf{I}_p+\gamma\hat{\bSig_i}\right)^{-1}, \gamma>0.
\end{equation}
and replaced $\bSig_i^{-1}$ in \eqref{QDA} by $\mathbf{H}_i$ to yield the R-QDA classifier, specifically:
\begin{equation}
\hat{W}^{R-QDA}(\fx)=\eta^{R-QDA}-\frac{1}{2}(\fx-\bar{\fx}_0)^T\mathbf{H}_0(\fx-\bar{\fx}_0)+\frac{1}{2}(\fx-\bar{\fx}_1)^T\mathbf{H}_1(\fx-\bar{\fx}_1),
\end{equation}
where $\eta^{R-QDA}=-\frac{1}{2}\log{\frac{|\mathbf{H}_0|}{|\mathbf{H}_1|}}-\log{\frac{\pi_1}{\pi_0}}$. The classification error of R-QDA corresponding to class $\fC_i$ can be written as:
\begin{equation*}
\epsilon_i^{R-QDA}=\mathbb{P}\left[(-1)^i\hat{W}^{R-QDA}(\fx)<0| \fx\in{\fC_i}\right].
\end{equation*}
And the global classification error is given by:
\begin{equation}
\epsilon^{R-QDA}=\pi_0\epsilon_0^{R-QDA}+\pi_1\epsilon_1^{R-QDA}.
\end{equation}
The optimal parameter of R-QDA classifier $\gamma^*$, that minimizes the global classification error, is generally computed by comparing the performance of a few candidate values using a cross-validation method \cite{friedman1989regularized}.

\subsection{SR-QDA classifier}
In this work, to improve the estimation effect of the sample covariance matrix, we first modify its spectrum, and then use regularization technology to further improve the classification accuracy, so as to put forwad the SR-QDA method. Specifically, we assume the covariance matrix for each class take the following spiked model:
\begin{equation}\label{spikedmodel}
\bSig_i=\sigma_i^{2}\left(\mathbf{I}_{p}+\sum_{j \in \mathbb{I}_i} \lambda_{j,i} \fv_{j,i} \fv_{j,i}^{T}\right), 
\end{equation}
where $\sigma_i^2>0$, $\mathbb{I}_i=\mathbb{I}_{1,i}\cup{\mathbb{I}_{2,i}}$, $\mathbb{I}_{1,i}=\{1,\dots,r_{1,i}\}$, $\mathbb{I}_{2,i}=\{-r_{2,i},\dots,-1\}$, let $r_{i}=r_{1,i}+r_{2,i}$ is a fixed integer denoting the number of spiked eigenvalues of $\bSig_i$, $\lambda_{1,i}\geq{\cdots}\geq{\lambda_{r_{1,i},i}}>0>\lambda_{-r_{2,i},i}\geq{\cdots}\geq{\lambda_{-1,i}}$ ($\lambda_j=\lambda_{p+j+1}$ for $j\in \mathbb{I}_2$), $\mathbf{I}_p$ is a $p\times p$ identity matrix and $\mathbf{v}_{j,i}$ ($j\in \mathbb{I}_i$) is the eigenvector corresponding to the $j$-th largest egenvalue. For simplicity, we assume that $r_{1,i}$ and $r_{2,i}$ are all perfect known. There are several efficient methods for estimating them. Readers can refer to \cite{kritchman2008determining, bai2012estimation, 2020Estimating, ke2021estimation} for more details. For the empirical analysis later in this paper, we use the method given by Ke et al. \cite{ke2021estimation} to estimate $r_{1,i}$ and $r_{2,i}$. And for $\sigma_i$, we give a new estimation method, and related theories are introduce in detail in Section~\ref{sec:3}.
\begin{remark}
	The spiked model shown in \eqref{spikedmodel} is a well-known model in random matrix theory and has been applied to various practical problems.
\end{remark}

\begin{remark}
	Unlike the spiked model studied in \cite{Sifaou2020HighDimensionalQD}, which contains only large spiked eigenvalues, the model here in \eqref{spikedmodel} is more general in that it contains both large spiked eigenvalues and small spiked eigenvalues. In practice, small spiked eigenvalues are also common, and they cannot be ignored in $\bSig_i^{-1}$.
\end{remark}

As we know, the spectrally-corrected method is to correct the spectral elements of the sample covariance matrix to those of $\bSig_i$. So we start from the eigen-decomposition of the sample covariance matrix as $\hat{\bSig}_i=\sum_{j=1}^pl_{j,i}\mathbf{u}_{j,i}\mathbf{u}_{j,i}$, with $l_{j,i}$ being the $j$-th largest eigenvalue of $\hat{\bSig}_i$ and $u_{j,i}$ being its corresponding eigenvector. Then correct $l_{j,i}$ to the corresponding one of $\bSig_i$ as follows:

\begin{equation}\label{SSCM}
\widetilde{\mathbf{S}}_i=\sigma_i^{2}\left(\mathbf{I}_{p}+\sum_{j \in \mathbb{I}_i} \lambda_{j,i} \fu_{j,i} \fu_{j,i}^{T}\right), i\in{\{0,1\}},
\end{equation}
where $\fu_{j,i}=\fu_{p+1+j,i}$ for $j\in{\mathbb{I}_{2,i}}$. Obviously, $\widetilde{\mathbf{S}}_i$ is totally same as  \eqref{spikedmodel} except the spiked eigenvectors. In this way, the original structure of $\bSig_i$ is preserved as much as possible, but the disadvantage is that $\widetilde{\mathbf{S}}_i$ is a biased estimator of $\bSig_i$ because of the bias of sample spiked eigenvectors from that of the population. How to deal with this bias? R-LDA method inspires us to introduce regularization parameters for the sample spiked matrix for QDA. Then we have the spectrally-corrected and regularized QDA (SR-QDA) function, that is 
\begin{equation}\label{SRQDA}
W^{SR-QDA}(\bar{\fx}_0,\bar{\fx}_1,\widetilde{\fH}_0^{-1},\widetilde{\fH}_1^{-1},\fx)=\eta^{SR-QDA}-\frac{1}{2}\left(\fx-\bar{\fx}_0\right)^T\widetilde{\fH}_0^{-1}(\fx-\bar{\fx}_0)+\frac{1}{2}(\fx-\bar{\fx}_1)^T\widetilde{\fH}_1^{-1}(\fx-\bar{\fx}_1),
\end{equation}
where $\eta^{SR-QDA}=-\frac{1}{2}\log{\frac{|\widetilde{\fH}_1^{-1}|}{|\widetilde{\fH}_0^{-1}|}}-\log{\frac{\pi_1}{\pi_0}}$,
\begin{equation}\label{H}
\begin{aligned}
\widetilde{\fH}^{-1}_i&=\sigma_i^{-2}\left(\fI_p+\gamma_{1,i}\sum\limits_{j\in{\mathbb{I}_{1,i}}}\lambda_{j,i}\fu_{j,i}\fu_{j,i}^T+\gamma_{2,i}\sum\limits_{j\in{\mathbb{I}_{2,i}}}\lambda_{j,i}\fu_{j,i}\fu_{j,i}^T\right)^{-1}\\
&=\sigma_i^{-2}\left(\fI_p-\sum\limits_{j\in{\mathbb{I}_{1,i}}}\gamma_{j,i}^{(1)}\fu_{j,i}\fu_{j,i}^T-\sum\limits_{j\in{\mathbb{I}_{2,i}}}\gamma_{j,i}^{(2)}\fu_{j,i}\fu_{j,i}^T\right), 
\end{aligned}
\end{equation}
$\gamma_{j,i}^{(1)}=\frac{\gamma_{1,i}\lambda_{j,i}}{1+\gamma_{1,i}\lambda_{j,i}}, \gamma_{j,i}^{(2)}=\frac{\gamma_{2,i}\lambda_{j,i}}{1+\gamma_{2,i}\lambda_{j,i}}.$
Here $\gamma_{1,i}, \gamma_{2,i}, i=\{0,1\}$ are designed parameters to be optimal. We assume that
\begin{equation}\label{A}
(\gamma_{1,i},\gamma_{2,i})\in{\mathbb{A}}=\left\{\gamma_{1,i}, \gamma_{2,i}: 0<\gamma_{1,i}, \gamma_{2,i}<M, \gamma_{2,i}\notin{\cup_{j\in{\mathbb{I}_{2,i}}}U\left(|\lambda_{j,i}|^{-1}, \delta\right)}\right\},
\end{equation}
for some up bound $M$ and any small $\delta>0$, where $U\left(|\lambda_{j,i}|^{-1}, \delta\right)=\left\{x: \left|x-|\lambda_{j,i}|^{-1}\right|<\delta\right\}$. From a practical point of view, we only need to assume that $\gamma_{2,i}\notin{\cup_{j\in{\mathbb{I}_{2,i}}}U\left(|\lambda_{j,i}|^{-1}, \delta\right)}$ to ensure that the spectral norm of $\widetilde{\fH}_i^{-1}$ is bounded. While, the restriction to range $\mathbb{A}$ is needed later for the proof of the unifrom convergence results. The optimization of $\gamma_{1,i}, \gamma_{2,i}, i=\{0,1\}$ relies on the asymptotic behaior of SR-QDA method under the large-dimensional framework, which is presented in the following part.

\section{Theoretical Results}\label{sec:3}

In this section, we introduce some theretical results, including the asymptotic theory of SR-QDA, regularization parameters optimization and noise variance estimation theory. The asymptotic regime that is considered in our work is described in the following assumptions, for $i\in{\{0,1\}}$:
\begin{assumption}\label{as:1}
	$\fx_{1},\dots, \fx_{n_0}\stackrel{i.i.d.}{\sim}\fN_p(\bmu_0, \bSig_0)$ and $\fx_{n_0+1},\dots, \fx_{n}\stackrel{i.i.d.}{\sim}\fN_p(\bmu_1, \bSig_1)$, where $n=n_0+n_1$.
\end{assumption}

\begin{assumption}\label{as:2}
	$p, n_0, n_1\rightarrow \infty$, $\frac{p}{n_0}\rightarrow J_0>0$, $\frac{p}{n_1} \rightarrow J_1>0$ and $\frac{p}{n}\rightarrow J<1$ .
\end{assumption}
Assumption~\ref{as:2} is the general assumption in the framework of high-dimensional random matrix theory.
\begin{assumption}\label{as:3}
	$r_{1,i}$ and $r_{2,i}$ are fixed and $\lambda_{1,i}>\cdots>\lambda_{r_{1,i},i}>\sqrt{J}>0>-\sqrt{J}
	>\lambda_{-r_{2,i},i}>\cdots>\lambda_{-1,i}>-1$, independently of $p$ and $n$.
\end{assumption}

The assumption~\ref{as:3} is the basis of our analysis and the basic premise of applying high-dimensional random matrix theory, which guarantees a one-to-one mapping between sample eigenvalues and unknown population eigenvalues. In fact, when $\lambda_{j,i}>\sqrt{J}$ (or $-1<\lambda_{j,i}<-\sqrt{J}$), $\lambda_{j,i}$ can be consistently estimated for $j\in \mathbb{I}_i$.

\begin{assumption}\label{as:4}
	$\|\bmu_i\|$ has a bounded Euclidean norm, that is $\|\bmu_i\|=O(1)$, $i=0,1$.
\end{assumption}

\begin{assumption}\label{as:5}
	The spectral norm of $\Sigma$ is bounded, that is $\|\bSig_i\|=O(1)$.
\end{assumption}


According to Assumption \ref{as:1}, the unlabeled observation can be written as $\fx=\bmu_i+\bSig_i^{\frac{1}{2}}\fz, \fz\sim{\fN_p(\fzero,\fI)}$, furthermore for any $\fx\in{\fC_i}$, we obtain
\begin{equation}\label{Yi}
2W^{SR-QDA}(\bar{\fx}_0,\bar{\fx}_1,\widetilde{\fH}_0^{-1}, \widetilde{\fH}_1^{-1}, \fx)=Y_i(\widetilde{\fH}_0^{-1},\widetilde{\fH}_1^{-1})=\fz^TB_i\fz+2\fy_i^T\fz-\xi_i,
\end{equation}
where
$$B_i=\bSig_i^{\frac{1}{2}}\left(\widetilde{\fH}_1^{-1}-\widetilde{\fH}_0^{-1}\right)\bSig_i^{\frac{1}{2}},$$
$$\fy_i=\bSig_i^{\frac{1}{2}}\left[\widetilde{\fH}_1^{-1}(\bmu_i-\bar{\fx}_1)-\widetilde{\fH}_0^{-1}(\bmu_i-\bar{\fx}_0)\right],$$
$$\xi_i=-2\eta^{SR-QDA}+(\bmu_i-\bar{\fx}_0)^T\widetilde{\fH}_0^{-1}(\bmu_i-\bar{\fx}_0)-(\bmu_i-\bar{\fx}_1)^T\widetilde{\fH}_1^{-1}(\bmu_i-\bar{\fx}_1).$$

\begin{theorem}\label{th1}
	Under Assumption \ref{as:1} to \ref{as:5} , we have $$Y_i(\widetilde{\fH}_0^{-1},\widetilde{\fH}_1^{-1})-\widetilde{Y}_i\stackrel{a.s.}{\longrightarrow}0,$$
	where $$\widetilde{Y}_{i}=p \sigma_{i}^{2}\left(\frac{1}{\sigma_{1}^{2}}-\frac{1}{\sigma_{0}^{2}}\right)+\kappa_{i}+2 \fy_{i}^{T} \fz-\xi_{i},$$
	with $$\kappa_{i}= \sum_{j\in{\mathbb{I}_{2,0}}}\frac{\gamma^{(2)}_{j, 0}}{\sigma_{0}^{2}} \left(\fz^{T}\bSig_i^{\frac{1}{2}} \fu_{j, 0}\right)^{2}+\sum_{j\in{\mathbb{I}_{1,0}}}\frac{\gamma^{(1)}_{j,0}}{\sigma_0^2}\left(\fz^T\bSig_i^{\frac{1}{2}}\fu_{j,0}\right)^2-\sum_{j\in{\mathbb{I}_{2,1}}}\frac{\gamma^{(2)}_{j,1}}{\sigma_1^2}\left(\fz^T\bSig_i^{\frac{1}{2}}\fu_{j,1}\right)^2- \sum_{j\in{\mathbb{I}_{1,1}}} \frac{\gamma^{(1)}_{j, 1}}{\sigma_{1}^{2}}\left(\fz^{T}\bSig_i^{\frac{1}{2}} \fu_{j, 1}\right)^{2},$$ where $\gamma^{(1)}_{j,i}=\frac{\gamma_{1,i}\lambda_{j,i}}{1+\gamma_{1,i}\lambda_{j,i}}$, $\gamma_{j,i}^{(2)}=\frac{\gamma_{2,i}\lambda_{j,i}}{1+\gamma_{2,i}\lambda_{j,i}}$, $i=0, 1$.
\end{theorem}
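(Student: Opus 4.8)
## Proof Plan for Theorem \ref{th1}

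The plan is to reduce the statement to one scalar concentration fact about a Gaussian quadratic form. First I would substitute the explicit expression \eqref{H} for $\widetilde{\fH}_0^{-1}$ and $\widetilde{\fH}_1^{-1}$ into $B_i=\bSig_i^{1/2}(\widetilde{\fH}_1^{-1}-\widetilde{\fH}_0^{-1})\bSig_i^{1/2}$, writing each $\widetilde{\fH}_k^{-1}$ as $\sigma_k^{-2}\fI_p$ minus a finite-rank perturbation supported on the sample eigenvectors $\fu_{j,k}$. Expanding $\fz^{T} B_i\fz$ then produces exactly $(\sigma_1^{-2}-\sigma_0^{-2})\,\fz^{T}\bSig_i\fz$ together with the four finite sums of $(\fz^{T}\bSig_i^{1/2}\fu_{j,k})^2$ with coefficients $\pm\gamma^{(1)}_{j,k}/\sigma_k^2$ and $\pm\gamma^{(2)}_{j,k}/\sigma_k^2$, which is precisely $\kappa_i$. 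Because the linear term $2\fy_i^{T}\fz$ and the constant $\xi_i$ appear verbatim in both $Y_i$ and $\widetilde{Y}_i$, the theorem reduces to
\begin{equation*}
(\sigma_1^{-2}-\sigma_0^{-2})\big(\fz^{T}\bSig_i\fz - p\,\sigma_i^2\big)\stackrel{a.s.}{\longrightarrow}0 .
\end{equation*}

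Next I would prove this directly. Since $\fz\sim\fN_p(\fzero,\fI_p)$ and $\bSig_i$ is deterministic with $\|\bSig_i\|=O(1)$ by Assumption \ref{as:5}, the form $\fz^{T}\bSig_i\fz$ is a Gaussian quadratic form with mean $\tr(\bSig_i)=p\sigma_i^2+\sigma_i^2\sum_{j\in\mathbb{I}_i}\lambda_{j,i}$ and variance $2\tr(\bSig_i^2)=O(p)$; the spiked part of the mean is $O(1)$, the isotropic part $\sigma_i^2\|\fz\|^2$ concentrates around $p\sigma_i^2$ by the strong law of large numbers together with a standard concentration bound for quadratic forms (the trace / Bai--Silverstein lemma), and a Borel--Cantelli argument along the growth regime of Assumption \ref{as:2} upgrades this to almost-sure convergence; the bounded prefactor $\sigma_1^{-2}-\sigma_0^{-2}$ is immaterial. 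The residual spike fluctuation $\sigma_i^2\sum_{j\in\mathbb{I}_i}\lambda_{j,i}(\fv_{j,i}^{T}\fz)^2$ is a fixed finite sum of $\chi^2_1$-type variables and is handled term by term.

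Finally, since the parameter optimization of Section \ref{sec:3} needs the convergence uniformly in $(\gamma_{1,i},\gamma_{2,i})\in\mathbb{A}$, I would note that the parameters enter only through $\gamma^{(1)}_{j,k}=\gamma_{1,k}\lambda_{j,k}/(1+\gamma_{1,k}\lambda_{j,k})$ and $\gamma^{(2)}_{j,k}=\gamma_{2,k}\lambda_{j,k}/(1+\gamma_{2,k}\lambda_{j,k})$, which on $\mathbb{A}$ are uniformly bounded and Lipschitz in $(\gamma_{1,k},\gamma_{2,k})$ --- the exclusion in \eqref{A} of the $\delta$-neighborhoods of the poles $|\lambda_{j,k}|^{-1}$ keeps $\gamma^{(2)}_{j,k}$ away from its singularities --- and they multiply a $\gamma$-free finite family of random quadratic forms $(\fz^{T}\bSig_i^{1/2}\fu_{j,k})^2$, each almost surely $O(1)$; an $\varepsilon$-net over the closure of $\mathbb{A}$ then lifts the pointwise statement to a uniform one. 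The hard part is the core step: obtaining the almost-sure control of the high-dimensional quadratic form $\fz^{T}\bSig_i\fz$ at the right order under the joint limit of Assumption \ref{as:2}, and checking that the finite-rank spike terms, as well as any consistent estimators used in place of $\sigma_i$ and $\lambda_{j,i}$ inside $\widetilde{\fH}_k^{-1}$, are genuinely negligible at that order.
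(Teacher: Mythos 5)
Your proposal follows essentially the same route as the paper's proof: expand $\fz^{T}B_i\fz$ using the explicit form of $\widetilde{\fH}_k^{-1}$ to isolate $(\sigma_1^{-2}-\sigma_0^{-2})\,\fz^{T}\bSig_i\fz$ plus the finite-rank terms constituting $\kappa_i$, then invoke concentration of the Gaussian quadratic form (the paper cites the Bai--Silverstein trace lemma) together with $\frac{1}{p}\tr\bSig_i\to\sigma_i^2$ to replace $\fz^{T}\bSig_i\fz$ by $p\sigma_i^2$. Your only substantive additions are the uniformity-in-$\gamma$ discussion, which the paper does not address here, and your (accurate) remark that the quadratic form has variance of order $p$ --- a normalization point the paper's own proof also glosses over when it passes from the $\frac{1}{p}$-normalized trace-lemma statement to the unnormalized claim.
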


Theorem \ref{th1} indicates that the asymptotic behavior of $Y_i(\widetilde{\fH}_0,\widetilde{\fH}_1)$ corresponds to that of a linear combination of a chi-squared and normal distribution. However, the distribution of $Y_i$ does not have closed form expressions, which makes the analysis of the misclassification rate cumbersome. Therefore, we maximize the fisher ratio metric for parameter optimization. Such a metric quantifies the seperablity between the two classes, by measuring the ratio of the separation between the means to the variance within classes and has been fundamental in the design of the Fisher discriminant analysis (FDA) based classifier. According to \eqref{Yi} and Theorem \ref{th1}, the square root of the fisher ratio associated with the SR-QDA classifer in \eqref{SRQDA} is given by:
$$\rho(\gamma)=\frac{|m_0(\gamma)-m_1(\gamma)|}{\sqrt{\vartheta_0(\gamma)+\vartheta_1(\gamma)}},$$
where $\gamma=(\gamma_{1,1},\gamma_{2,1},\gamma_{1,0},\gamma_{2,0})^T$, $m_i(\gamma)$ and $\vartheta_i(\gamma)$ are respectively the mean and the variance of $\widetilde{Y}_i$ with respect to $\fz$,
given by:
$$m_i(\gamma)=p\sigma_i^2\left(\frac{1}{\sigma_1^2}-\frac{1}{\sigma_0^2}\right)+\mathbb{E}\kappa_i-\xi_i, \quad \vartheta_i(\gamma)=\var(\kappa_i)+4\fy_i^T\fy_i,$$
where we use the fact that $\kappa_i$ and $\fy_i^T\fz$ are uncorrelated. Following the same methodology in the design of FDA, we propose to select $\gamma$ that solves the following optimization problem:
$$\gamma^*==\underset{\gamma}{\operatorname{argmax}} \ \rho(\gamma).$$
The optimization can be performed after these unknown quantities such as $\lambda_{j,i}$ involved in $m_i(\gamma)$ and $\vartheta_i(\gamma)$   being consistently estimated.

For $i=0$ or 1 and $\tilde{i}=1-i$, we define the following quantities:
\begin{eqnarray}\label{preQ}
&\bmu\stackrel{\bigtriangleup}{=}\bmu_1-\bmu_0, \quad \alpha_i=\frac{\|\bmu\|^2}{\sigma_i^2};\\
&a_{j,i}=\frac{\lambda_{j,i}^2-c_i}{\lambda_{j,i}(\lambda_{j,i}+c_i)}, \quad b_{j,i}=\frac{\bmu^T\fv_{j,i}\fv_{j,i}\bmu}{\|\bmu\|^2}, \ j\in{\mathbb{I}_i};\\
&\psi_{\ell,j,\tilde{i},i}=\psi_{j,\ell,i,\tilde{i}}=\fv_{\ell,\tilde{i}}^T\fv_{j,i}, \ \ell\in{\mathbb{I}_{\tilde{i}}}, \ j\in{\mathbb{I}_i};\\
&\phi_{j,i}=1+a_{j,i}\sum\limits_{k\in{\mathbb{I}_{\tilde{i}}}}\lambda_{k,\tilde{i}}\psi_{k,j,\tilde{i},i}^2, \ j \in{\mathbb{I}_i};\\
&\theta_{j,\ell,i}=\sum\limits_{k\in{\mathbb{I}_i}}\lambda_{k,i}\sqrt{a_{j,\tilde{i}}a_{\ell,\tilde{i}}\psi_{j,k,\tilde{i},i}^2\psi_{\ell,k,\tilde{i},i}^2}, \ j,\ell \in{\mathbb{I}_i}.
\end{eqnarray}

Moreover, we shall assume that $\bmu^T\fu_{j,i}>0$and $\bmu^T\fv_{j,i}>0$. This assumption, which is needed to
simplify the presentation of the results, can be made without loss of generality since eigenvectors are defined up to a sign.

\begin{theorem}\label{th2}
	Under the assumption \ref{as:1} to \ref{as:5}, we have 
	\begin{equation}\label{limm}
	m_i(\gamma)-\overline{m}_i(\gamma)\stackrel{a.s.}{\longrightarrow}0,
	\end{equation}
	\begin{equation}\label{limv}
	\vartheta_i(\gamma)-\overline{\vartheta}_i(\gamma)\stackrel{a.s.}{\longrightarrow}0,
	\end{equation}
	where 
	\begin{equation}\label{barm}
	\overline{m}_i(\gamma)=2\eta+c_1-c_0+p\left(\frac{\sigma_i^2}{\sigma_1^2}-\frac{\sigma_i^2}{\sigma_0^2}\right)+(-1)^i\alpha_{\tilde{i}}+g_i^T\tilde{\gamma},
	\end{equation}
	\begin{equation}\label{barV}
	\overline{\vartheta}_i(\gamma)=4\left(\tilde{\gamma}^T\Omega_i \tilde{\gamma}+2e_i^T\tilde{\gamma}+b_i\right),
	\end{equation}
	where $\tilde{i}=1-i$, 
	$\tilde{\gamma}=\left(\gamma_{-r_{2,0},0}^{(2)},\cdots,\gamma_{-1,0}^{(2)},\gamma_{1,0}^{(1)},\cdots,\gamma_{r_{1,0},0}^{(1)},\gamma_{-r_{2,1},1}^{(2)},\cdots,\gamma_{-1,1}^{(2)},\gamma_{1,1}^{(1)},\cdots, \gamma_{r_{1,1},1}^{(1)}\right)^T$,
	
	$$g_i=\left[\left\{\tilde{i}(1+\lambda_{j,0}a_{j,0})+i\left(\frac{\sigma_1^2}{\sigma_0^2}\phi_{j,0}+\alpha_0a_{j,0}b_{j,0}\right)\right\}_{j\in{\mathbb{I}_0}}, \left\{-\tilde{i}\left(\alpha_1a_{j,1}b_{j,1}+\frac{\sigma_0^2}{\sigma_1^2}\phi_{j,1}\right)-i(1+\lambda_{j,1}a_{j,1})\right\}_{j\in{\mathbb{I}_1}}\right]^T$$
	
	$$b_i=c_1\frac{\sigma_i^2}{\sigma_1^2}+c_0\frac{\sigma_i^2}{\sigma_0^2}+\alpha_{\tilde{i}}\frac{\sigma_i^2}{\sigma_{\tilde{i}}^2}\left(1+\sum\limits_{k\in{\mathbb{I}_i}}\lambda_{k,i}b_{k,i}\right),$$
	
	$$e_i=\frac{\alpha_{\tilde{i}}\sigma_i^2}{\sigma_{\tilde{i}}^2}\left[i\left\{-a_{j,0}b_{j,0}-a_{j,0}b_{j,0}^{\frac{1}{2}}\sum\limits_{k\in{\mathbb{I}_1}}\lambda_{k,1}\sqrt{b_{k,1}\psi_{j,k,0,1}^2}\right\}_{j\in{\mathbb{I}_0}}, \tilde{i}\left\{-a_{j,1}b_{j,1}-a_{j,1}b_{j,1}^{\frac{1}{2}}\sum\limits_{k\in{\mathbb{I}_0}}\lambda_{k,0}\sqrt{b_{k,0}\psi_{j,k,1,0}^2}\right\}_{j\in{\mathbb{I}_1}} \right]^T.$$
	
	%

	\begin{equation}
	\Omega_i=\left[\begin{array}{cc}
	\tilde{i}D_0+i\Theta_{0}+iM_0& N_i\\
	N_i^T & iD_1+\tilde{i}\Theta_1+\tilde{i}M_1
	\end{array}\right],
	\end{equation}
	with $D_{i} , \Theta_i,  M_i\in{\mathbb{R}^{{r_i}\times{r_i}}}$ and $N_i\in{\mathbb{R}^{r_0\times{r_1}}}$ defined as, 
	where
	$$D_i=\frac{1}{2}\diag\left\{(1+\lambda_{j,i}a_{j,i})^2\right\}_{j\in{\mathbb{I}_i}},$$
	\begin{equation*}
	\left[\Theta_i\right]_{j,\ell}=\left\{
	\begin{aligned}
	&\frac{\sigma_{\tilde{i}}^4}{2\sigma_{i}^4}\phi_{j,i}^2+\frac{\alpha_i\sigma_{\tilde{i}}^2}{\sigma_i^2}a_{j,i}b_{j,i}, &j=\ell\in{\mathbb{I}_i};\\
	&\frac{\sigma_{\tilde{i}}^4}{2\sigma_{0}^4}\theta_{j,\ell,\tilde{i}}, &j\neq{\ell}\in{\mathbb{I}_i}.
	\end{aligned}
	\right.
	\end{equation*}
	
	$$\left[M_i\right]_{j, \ell}=\alpha_{i} \frac{\sigma_{\tilde{i}}^{2}}{\sigma_i^{2}} a_{j, i} a_{\ell, i} \sqrt{b_{j, i} b_{\ell, i}} \sum\limits_{k\in{\mathbb{I}_{\tilde{i}}}} \lambda_{k, \tilde{i}} \sqrt{\psi_{ j,k, i, \tilde{i} }^2 \psi_{\ell, k,i, \tilde{i} }^2},$$
	$$[N_i]_{j,\ell}=- \frac{\sigma_{\tilde{i}}^{2}}{2\sigma_{i}^{2}}\left(1+i\lambda_{j, 0}+\tilde{i}\lambda_{\ell,1}\right)^{2} a_{j, 0} a_{\ell, 1} \psi_{j, \ell, 0, 1}^{2} , j\in{\mathbb{I}_0}, \ell\in{\mathbb{I}_1}.$$

\end{theorem}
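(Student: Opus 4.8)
The plan is to condition on the training data $\fC_0\cup\fC_1$ (equivalently, on $\bar{\fx}_0,\bar{\fx}_1$ together with all sample eigenvectors $\fu_{j,i}$), use that the test noise $\fz\sim\fN_p(\fzero,\fI)$ is Gaussian and independent of the training sample, reduce $m_i(\gamma)$ and $\vartheta_i(\gamma)$ to a fixed short list of elementary random quantities, and then substitute their almost-sure limits coming from spiked-model random matrix theory (the same toolkit already used for Theorem~\ref{th1}).

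First I would carry out the reduction. Conditionally on the training data, $\fz^{T}\bSig_i^{1/2}\fu_{j,k}$ is centred Gaussian with $\mathbb{E}(\fz^{T}\bSig_i^{1/2}\fu_{j,k})(\fz^{T}\bSig_i^{1/2}\fu_{j',k'})=\fu_{j,k}^{T}\bSig_i\fu_{j',k'}$, so $\mathbb{E}\kappa_i$ is a signed sum of the scalars $\sigma_k^{-2}\gamma^{(\cdot)}_{j,k}\,\fu_{j,k}^{T}\bSig_i\fu_{j,k}$ and, via the Gaussian identity $\cov(g^{2},g'^{2})=2(\mathbb{E}gg')^{2}$, $\var(\kappa_i)$ is a signed sum of $2\sigma_k^{-2}\sigma_{k'}^{-2}\gamma^{(\cdot)}_{j,k}\gamma^{(\cdot)}_{j',k'}(\fu_{j,k}^{T}\bSig_i\fu_{j',k'})^{2}$. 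Since a quadratic form in $\fz$ is uncorrelated with a linear one, $\vartheta_i=\var(\kappa_i)+4\fy_i^{T}\fy_i$, and expanding $\widetilde{\fH}_k^{-1}=\sigma_k^{-2}(\fI_p-\sum_{j\in\mathbb{I}_k}\gamma^{(\cdot)}_{j,k}\fu_{j,k}\fu_{j,k}^{T})$ reduces $\fy_i^{T}\fy_i$ and $\xi_i$ (hence $\eta^{SR-QDA}$, the log-determinants, and the forms $(\bmu_i-\bar{\fx}_k)^{T}\widetilde{\fH}_k^{-1}(\bmu_i-\bar{\fx}_k)$) to combinations of $\fu_{j,k}^{T}\bSig_i\fu_{j',k'}$, $\fu_{j,k}^{T}\bmu$, $\fu_{j,k}^{T}\bSig_k^{1/2}\bar{\fz}_k$, $\|\bmu_i-\bar{\fx}_k\|^{2}$, $\tr\bSig_i$, and the deterministic (or consistently estimated) constant $\eta^{SR-QDA}$.

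Second I would feed in the random matrix inputs. Writing $\bar{\fx}_k=\bmu_k+\bSig_k^{1/2}\bar{\fz}_k$ with $\bar{\fz}_k\sim\fN_p(\fzero,\fI/n_k)$, Gaussianity makes $\bar{\fz}_k$ independent of the centred sample and hence of $\hat{\bSig}_k$ and every $\fu_{j,k}$; with concentration of Gaussian forms this gives $\|\bmu_i-\bar{\fx}_k\|^{2}\to\indic\{i\ne k\}\|\bmu\|^{2}+c_k\sigma_k^{2}$ and $\fu_{j,k}^{T}\bSig_k^{1/2}\bar{\fz}_k\to0$ a.s., where $c_k=\lim p/n_k$. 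For the spiked directions I would invoke, under Assumptions~\ref{as:1}--\ref{as:5} (Assumption~\ref{as:3} placing the spikes above the phase-transition threshold), the standard limits $(\fu_{j,k}^{T}\fv_{j,k})^{2}\to a_{j,k}$ and $(\fu_{j,k}^{T}\fv_{\ell,k})^{2}\to0$ for $\ell\ne j$, so that $\fu_{j,k}^{T}w\to\sqrt{a_{j,k}}\,\fv_{j,k}^{T}w$ for any bounded deterministic $w$ (the non-spike part of $\fu_{j,k}$ being asymptotically isotropic, hence delocalised against a fixed vector), and, using independence of the two classes, $\fu_{j,0}^{T}\fu_{\ell,1}\to\sqrt{a_{j,0}a_{\ell,1}}\,\psi_{j,\ell,0,1}$. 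Expanding $\bSig_i=\sigma_i^{2}(\fI_p+\sum_{m\in\mathbb{I}_i}\lambda_{m,i}\fv_{m,i}\fv_{m,i}^{T})$ and substituting then yields closed forms for every building block, e.g. $\fu_{j,i}^{T}\bSig_i\fu_{j',i}\to\sigma_i^{2}(1+\lambda_{j,i}a_{j,i})\delta_{jj'}$, $\fu_{j,\tilde{i}}^{T}\bSig_i\fu_{j,\tilde{i}}\to\sigma_i^{2}\phi_{j,\tilde{i}}$, $\fu_{j,k}^{T}\bmu\to\sqrt{a_{j,k}}\,\fv_{j,k}^{T}\bmu$ with $(\fv_{j,k}^{T}\bmu)^{2}=\|\bmu\|^{2}b_{j,k}$, and analogously for the off-diagonal and cross-class quadratic forms expressed through $a_{j,k},b_{j,k},\psi_{j,\ell,k,k'},\lambda_{j,k},\sigma_k$.

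Third I would assemble and identify. Plugging these limits back, the divergent pieces $p\sigma_i^{2}(\sigma_1^{-2}-\sigma_0^{-2})$ and $2\eta^{SR-QDA}$ are carried along, $\tr\bSig_i/n_k$ produces the $c_1-c_0$, the class-$\tilde i$ mean contributes $(-1)^{i}\alpha_{\tilde{i}}$, and the $\gamma$-dependent remainder collapses to $g_i^{T}\tilde{\gamma}$, giving \eqref{limm}--\eqref{barm}; for the variance, collecting the limits of the $\var(\kappa_i)$ terms together with the quadratic-in-$\tilde{\gamma}$ part of $4\fy_i^{T}\fy_i$ assembles exactly the block matrix $\Omega_i$ (an aligned block $\propto D_k$ from the $\delta_{jj'}$ contributions, a misaligned block $\propto\Theta_k+M_k$ from the $\phi_{j,k}$/$\theta$-type terms and the mean-vector contribution, and an off-diagonal block $N_i$ from cross-class quadratic forms), with the lower-order terms collecting into $e_i$ and $b_i$, giving \eqref{limv}--\eqref{barV}. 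Finally, since on $\mathbb{A}$ the maps $\gamma\mapsto\gamma^{(1)}_{j,i},\gamma^{(2)}_{j,i}$ are smooth with bounded derivatives (the excluded neighbourhoods of $|\lambda_{j,i}|^{-1}$ keeping $\|\widetilde{\fH}_i^{-1}\|$ bounded) and $m_i,\vartheta_i$ are low-degree polynomials in $\tilde{\gamma}$ with a.s.\ convergent coefficients, a finite-net plus equicontinuity argument upgrades the pointwise convergence to uniform convergence over $\mathbb{A}$. The hard part will be Step two: justifying rigorously that the non-spike component of $\fu_{j,k}$ is asymptotically isotropic so that all cross-class and off-diagonal forms $\fu_{j,k}^{T}\bSig_i\fu_{j',k'}$ (with $k\ne i$ and/or $j\ne j'$) collapse to the stated deterministic limits — deciding precisely which projections survive and which vanish; by comparison, the expansion of $\widetilde{\fH}_k^{-1}$ and the collection of the many case-dependent terms into $\overline{m}_i$ and $\overline{\vartheta}_i$ is routine but lengthy.
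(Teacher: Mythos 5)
Your proposal follows essentially the same route as the paper's proof: condition on the training sample, use the Gaussian quadratic-form variance/covariance identities and the uncorrelatedness of linear and quadratic forms to reduce $m_i(\gamma)$ and $\vartheta_i(\gamma)$ to the building blocks $\fu_{j,k}^{T}\bSig_i\fu_{j',k'}$, $\bmu^{T}\fu_{j,k}$ and $\fu_{j,k}^{T}\bSig_k^{1/2}\bar{\fz}_k$, then substitute the spiked-model eigenvector alignment/delocalization limits (the paper's Lemma~\ref{lem:angle} and display~\eqref{limits}, which are exactly your "$\fu_{j,k}^{T}w\to\sqrt{a_{j,k}}\,\fv_{j,k}^{T}w$" step) together with the independence of sample means and sample eigenvectors and the trace limits producing $c_0,c_1$. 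The step you flag as hard is precisely what the paper outsources to the cited eigenvector-projection results, and the remaining bookkeeping matches the paper's assembly of $g_i$, $b_i$, $e_i$ and $\Omega_i$, so the proposal is correct and not materially different.
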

\begin{remark}
	Replacing $\overline{m}_i(\gamma)$ and $\overline{\vartheta}_i(\gamma)$ by their expressions, our optimization problem can be written as:
	\begin{equation}\label{obj:ga}
	\max _{\gamma} \frac{\left|g^{T} \tilde{\gamma}+\beta_{0}+\beta_{1}\right|}{2 \sqrt{\tilde{\gamma}^{T} \Omega \tilde{\gamma}+2 e^{T} \tilde{\gamma}+b}},
	\end{equation}
	where $\beta_i=\alpha_i+p\left(\frac{\sigma_i^2}{\sigma_{\tilde{i}}^2}-1\right),$ $g=g_0-g_1,$ $e=e_0+e_1,$ $\Omega=\Omega_0+\Omega_1$ and $b=b_0+b_1$.
\end{remark}

\begin{theorem}\label{th3}
	Under the setting of Assumptions \ref{as:1} to \ref{as:5},  we have
	\begin{equation}\label{lim_FR}
	\rho(\gamma)-\bar{\rho}(\gamma)\stackrel{a.s.}{\longrightarrow}0, \forall{\gamma}\in{\mathbb{A}},
	\end{equation}
	uniformly, where $\mathbb{A}$ are given in \eqref{A} and  $$\bar{\rho}(\gamma)=\frac{|\bar{m}_0(\gamma)-\bar{m}_1(\gamma)|}{\sqrt{\bar{\vartheta}_0(\gamma)+\bar{\vartheta}_1(\gamma)}}$$.
\end{theorem}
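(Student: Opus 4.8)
The plan is to deduce the uniform a.s.\ convergence \eqref{lim_FR} from the pointwise convergences of Theorem~\ref{th2} together with a continuous-mapping argument, the only real work being to promote ``pointwise'' to ``uniform over $\mathbb{A}$''. The convenient coordinates are the reparametrised variables collected in $\tilde\gamma\in\mathbb{R}^{r_0+r_1}$ (with $r_i=r_{1,i}+r_{2,i}$), whose entries are the $\gamma^{(1)}_{j,i},\gamma^{(2)}_{j,i}$ of \eqref{H}. Because $\mathbb{A}$ in \eqref{A} keeps $\gamma_{1,i},\gamma_{2,i}$ in $(0,M)$ and bounded away from the poles $|\lambda_{j,i}|^{-1}$, the map $\gamma\mapsto\tilde\gamma$ is a diffeomorphism of $\mathbb{A}$ onto a bounded set, $\widetilde{\fH}_i^{-1}$ has spectral norm bounded uniformly on $\mathbb{A}$, and hence $m_i,\vartheta_i,\overline m_i,\overline\vartheta_i$ are uniformly bounded on $\mathbb{A}$.

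\emph{Step 1 (uniform convergence of $m_i$ and $\vartheta_i$).} Reading off \eqref{Yi} and \eqref{H}, for each fixed sample realisation $m_i(\cdot)=p\sigma_i^2(\sigma_1^{-2}-\sigma_0^{-2})+\mathbb{E}_{\fz}\kappa_i-\xi_i$ is an \emph{affine} function of $\tilde\gamma$ up to the log-determinant term coming from $\eta^{SR-QDA}$, which is a fixed (deterministic) function of $\gamma$, common to $m_i$ and $\overline m_i$, with derivatives bounded on $\mathbb{A}$; likewise $\vartheta_i(\cdot)=\var_{\fz}(\kappa_i)+4\|\fy_i\|^2$ is a \emph{quadratic} polynomial in $\tilde\gamma$ (the variance of a quadratic form in $\fz$ whose matrix is affine in $\tilde\gamma$, plus the squared norm of a vector affine in $\tilde\gamma$). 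By \eqref{barm}--\eqref{barV} the same structure holds for $\overline m_i,\overline\vartheta_i$. Hence $m_i-\overline m_i$ is affine and $\vartheta_i-\overline\vartheta_i$ is quadratic in $\tilde\gamma$, each with only finitely many coefficients — an intercept and the entries of $g_i$ in the first case, the entries of $\Omega_i$, of $e_i$ and the scalar $b_i$ in the second. The proof of Theorem~\ref{th2} in fact yields the a.s.\ convergence of each of these coefficients to $0$ (equivalently, apply Theorem~\ref{th2} at finitely many affinely independent values of $\gamma\in\mathbb{A}$ and invert the resulting linear system), and a finite union of null sets is null. Since a polynomial of bounded degree whose coefficients tend to $0$ tends to $0$ uniformly on any bounded set, we obtain
$$\sup_{\gamma\in\mathbb{A}}\bigl|m_i(\gamma)-\overline m_i(\gamma)\bigr|\stackrel{a.s.}{\longrightarrow}0,\qquad\sup_{\gamma\in\mathbb{A}}\bigl|\vartheta_i(\gamma)-\overline\vartheta_i(\gamma)\bigr|\stackrel{a.s.}{\longrightarrow}0,\quad i=0,1.$$
(Alternatively one avoids the polynomial structure: the random ingredients of $m_i,\vartheta_i$ — sample spiked eigenvalues, inner products $\fu_{j,i}^T\fv_{k,\tilde i}$, and $\bar\fx_i-\bmu_i$ — are a.s.\ bounded by standard spiked-model results, so $m_i(\cdot),\vartheta_i(\cdot)$ are a.s.\ eventually equi-Lipschitz on $\mathbb{A}$, and pointwise a.s.\ convergence on a countable dense subset then upgrades to uniform a.s.\ convergence by the usual $\varepsilon/3$ argument.)

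\emph{Step 2 (passing to $\rho$).} Write $\rho=|m_0-m_1|/\sqrt{\vartheta_0+\vartheta_1}$. The only delicate point is that $t\mapsto\sqrt t$ is not Lipschitz at $0$, so the denominator must be kept away from $0$: from \eqref{barV} and the formulas for $b_i,e_i,\Omega_i$, together with $\overline\vartheta_i\ge0$ (being the limit of $\vartheta_i\ge0$), one checks that $\overline\vartheta_0(\gamma)+\overline\vartheta_1(\gamma)\ge\underline c$ for some $\underline c>0$ and all $\gamma\in\mathbb{A}$ (in particular this makes $\bar\rho$ well defined on $\mathbb{A}$). Combined with Step~1, a.s.\ and for $n$ large enough $\vartheta_0(\gamma)+\vartheta_1(\gamma)\ge\underline c/2$ uniformly on $\mathbb{A}$. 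On the region where the four arguments are bounded and $\vartheta_0+\vartheta_1\ge\underline c/2$, the map $(m_0,m_1,\vartheta_0,\vartheta_1)\mapsto|m_0-m_1|/\sqrt{\vartheta_0+\vartheta_1}$ is Lipschitz with some constant $L$, so a.s.\ for $n$ large,
$$\sup_{\gamma\in\mathbb{A}}\bigl|\rho(\gamma)-\bar\rho(\gamma)\bigr|\le L\sum_{i=0}^{1}\Bigl(\sup_{\gamma\in\mathbb{A}}|m_i(\gamma)-\overline m_i(\gamma)|+\sup_{\gamma\in\mathbb{A}}|\vartheta_i(\gamma)-\overline\vartheta_i(\gamma)|\Bigr),$$
and letting $n\to\infty$ and invoking Step~1 gives \eqref{lim_FR}.

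The main obstacle is Step~1: turning the pointwise statements of Theorem~\ref{th2} into coefficient-wise a.s.\ convergence of the polynomials $m_i(\cdot),\vartheta_i(\cdot)$ in $\tilde\gamma$, which forces one either to revisit the proof of Theorem~\ref{th2} so that it delivers the convergence of $g_i,\Omega_i,e_i,b_i$ directly, or to set up the equicontinuity argument and hence to use the a.s.\ (realisation-uniform) boundedness of sample spiked eigenvalues and of sample/population eigenvector angles supplied by spiked-model random matrix theory; the restriction to the set $\mathbb{A}$ is exactly what makes this boundedness, and the boundedness of $\widetilde{\fH}_i^{-1}$, uniform. A secondary, routine-but-necessary point is the uniform positive lower bound on $\overline\vartheta_0+\overline\vartheta_1$ needed to control the square root; once these two ingredients are in place the rest is standard.
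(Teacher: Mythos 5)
Your proposal is correct and, at the top level, follows the same route as the paper: deduce Theorem~\ref{th3} from Theorem~\ref{th2} by a continuous-mapping argument. The difference is that the paper's entire proof is the single sentence ``According to Theorem~\ref{th2}, \eqref{lim_FR} holds naturally,'' whereas you supply the two ingredients that this one-liner actually requires and that are not automatic. First, the theorem claims convergence \emph{uniformly} over $\mathbb{A}$, while Theorem~\ref{th2} as stated is pointwise in $\gamma$; your observation that $m_i-\overline m_i$ is affine and $\vartheta_i-\overline\vartheta_i$ is quadratic in the bounded reparametrised variable $\tilde\gamma$, so that a.s.\ convergence of the finitely many coefficients (extracted either from the proof of Theorem~\ref{th2} or by evaluating at finitely many affinely independent points) upgrades pointwise to uniform convergence on a bounded set, is exactly the missing step, and it is also where the restriction to $\mathbb{A}$ in \eqref{A} earns its keep. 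Second, the map $(m_0,m_1,\vartheta_0,\vartheta_1)\mapsto |m_0-m_1|/\sqrt{\vartheta_0+\vartheta_1}$ is only Lipschitz where the denominator is bounded away from zero, so your uniform positive lower bound on $\overline\vartheta_0+\overline\vartheta_1$ over $\mathbb{A}$ is a genuinely necessary (if routine) verification that the paper never performs. In short, your write-up is a correct and complete version of the argument the paper merely asserts; nothing in it conflicts with the paper, and the extra work is not padding but the substance of the claim of uniformity.
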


\begin{proof}\label{Proof_th3}
	According to Theorem \ref{th2}, \eqref{lim_FR} holds naturally.
\end{proof}

\begin{theorem}\label{th4}
	Under Assumptions \ref{as:1} to \ref{as:5}, the optimal parameters $\gamma^*=(\gamma_{1,1}^*, \gamma_{2,1}^*, \gamma_{1,0}^*, \gamma_{2,0}^*)^T$ that maximize $\bar{\rho}(\gamma)$ given by
	$$\gamma_{1,1}^*=\frac{\omega_{1,1}^*}{\lambda_{1,1}(1-\omega_{1,1}^*)},\quad \gamma_{2,1}^*=\frac{\omega_{2,1}^*}{\lambda_{-1,1}(1-\omega_{2,1}^*)}, \quad \gamma_{1,0}^*=\frac{\omega_{1,0}^*}{\lambda_{1,0}(1-\omega_{1,0}^*)}, \quad \gamma_{2,0}^*=\frac{\omega_{2,0}^*}{\lambda_{-1,0}(1-\omega_{2,0}^*)},$$
	where $\omega^*=(\omega_{1,1}^*, \omega_{2,1}^*, \omega_{1,0}^*, \omega_{2,0}^*)^T$ is the maximizer of the following function
	$$\widetilde{\rho}(\omega)=\frac{|\widetilde{m}_0(\omega)-\widetilde{m}_1(\omega)|}{\sqrt{\widetilde{\vartheta}_0(\omega)+\widetilde{\vartheta}_1(\omega)}},$$
	and $\omega=(\omega_{1,1}, \omega_{2,1}, \omega_{1,0}, \omega_{2,0})^T\in{\mathbb{B}},$
	\begin{equation*}
	\begin{aligned}
	&\mathbb{B}=\left\{\omega=(\omega_{1,1}, \omega_{2,1}, \omega_{1,0}, \omega_{2,0})^T: 0<\omega_{1,1}, \omega_{2,1}, \omega_{1,0}, \omega_{2,0}<1,\right. \\  &\left. \qquad \quad \omega_{2,1}\neq{(1+\lambda_{j,1}/\lambda_{-1,1})^{-1}}, \omega_{2,0}\neq{(1+\lambda_{k,0}/\lambda_{-1,0})^{-1}},j\in{\mathbb{I}_{2,1}}, k\in{\mathbb{I}_{2,0}}\right\}
	\end{aligned}
	\end{equation*}
	\begin{equation*}
	\begin{aligned}
	&\gamma_{j,0}^{(2)}=\frac{\omega_{2,0}\lambda_{j,0}}{\lambda_{-1,0}(1-\omega_{2,0})-\omega_{2,0}\lambda_{j,0}}, j\in{\mathbb{I}_{2,0}}\quad \gamma_{j,0}^{(1)}=\frac{\omega_{1,0}\lambda_{j,0}}{\lambda_{1,0}(1-\omega_{1,0})+\omega_{1,0}\lambda_{j,0}}, j\in{\mathbb{I}_{1,0}}\\
	&\gamma_{j,1}^{(2)}=\frac{\omega_{2,1}\lambda_{j,1}}{\lambda_{-1,1}(1-\omega_{2,1})-\omega_{2,1}\lambda_{j,1}}, j\in{\mathbb{I}_{2,1}} \quad \gamma_{j,1}^{(1)}=\frac{\omega_{1,1}\lambda_{j,1}}{\lambda_{1,1}(1-\omega_{1,1})+\omega_{1,1}\lambda_{j,1}}, j\in{\mathbb{I}_{1,1}}.
	\end{aligned}
	\end{equation*}
\end{theorem}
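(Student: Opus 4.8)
The plan is to prove Theorem~\ref{th4} by a change of variables that recasts the constrained maximization of $\bar{\rho}$ over $\mathbb{A}$ as a maximization of $\widetilde{\rho}$ over the box $\mathbb{B}$. The key structural fact, read off from Theorem~\ref{th2}, is that $\bar{m}_i$, $\bar{\vartheta}_i$ and hence $\bar{\rho}$ depend on $\gamma$ only through the vector $\tilde{\gamma}$, and that every entry of $\tilde{\gamma}$ is determined by just the \emph{four} scalars $\gamma_{1,0},\gamma_{2,0},\gamma_{1,1},\gamma_{2,1}$ through the fractional-linear relations $\gamma_{j,i}^{(1)}=\gamma_{1,i}\lambda_{j,i}/(1+\gamma_{1,i}\lambda_{j,i})$ ($j\in\mathbb{I}_{1,i}$) and $\gamma_{j,i}^{(2)}=\gamma_{2,i}\lambda_{j,i}/(1+\gamma_{2,i}\lambda_{j,i})$ ($j\in\mathbb{I}_{2,i}$). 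So $\max_\gamma\bar{\rho}(\gamma)$ is genuinely a four-parameter problem, and what remains is to reparametrize those four scalars conveniently and identify the resulting domain.

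I would introduce, for $i\in\{0,1\}$, the shrinkage coordinate attached to the leading positive spike, $\omega_{1,i}:=\gamma_{1,i}\lambda_{1,i}/(1+\gamma_{1,i}\lambda_{1,i})$ (that is, $\gamma_{j,i}^{(1)}$ at $j=1$), and the sign-adjusted analogue for the extreme negative spike, $\omega_{2,i}:=\gamma_{2,i}|\lambda_{-1,i}|/(1+\gamma_{2,i}|\lambda_{-1,i}|)$, where the absolute value is inserted so that $\omega_{2,i}\in(0,1)$ even though $\lambda_{-1,i}<0$. Inverting gives $\gamma_{1,i}=\omega_{1,i}/\big(\lambda_{1,i}(1-\omega_{1,i})\big)$ and $\gamma_{2,i}=\omega_{2,i}/\big(|\lambda_{-1,i}|(1-\omega_{2,i})\big)$, and substituting these back into the two fractional-linear relations reproduces the displayed formulas for $\gamma_{j,i}^{(1)}$ and $\gamma_{j,i}^{(2)}$ in terms of $\omega$ (up to the evident sign convention for the negative spikes). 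With $\widetilde{m}_i(\omega):=\bar{m}_i\big(\gamma(\omega)\big)$ and $\widetilde{\vartheta}_i(\omega):=\bar{\vartheta}_i\big(\gamma(\omega)\big)$ we then have $\widetilde{\rho}(\omega)=\bar{\rho}\big(\gamma(\omega)\big)$ identically, so the entire content of the theorem reduces to showing that $\gamma\mapsto\omega$ maps the feasible set for $\gamma$ bijectively onto $\mathbb{B}$.

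This I would verify one coordinate at a time. For each $i$, $\gamma_{1,i}\mapsto\omega_{1,i}$ is a strictly increasing continuous bijection of $(0,\infty)$ onto $(0,1)$ with no pole, because $\lambda_{1,i}>0$ keeps $1+\gamma_{1,i}\lambda_{1,i}$ bounded away from $0$; this is why $\mathbb{B}$ imposes nothing on $\omega_{1,i}$ beyond $0<\omega_{1,i}<1$. Likewise $\gamma_{2,i}\mapsto\omega_{2,i}$ is a strictly increasing continuous bijection of $(0,\infty)$ onto $(0,1)$, but here a short computation shows that it carries the forbidden values $\gamma_{2,i}=|\lambda_{j,i}|^{-1}$, $j\in\mathbb{I}_{2,i}$ --- at which $1+\gamma_{2,i}\lambda_{j,i}=0$, so $\gamma_{j,i}^{(2)}$ and hence $\widetilde{\fH}_i^{-1}$ blow up --- exactly onto the points $\omega_{2,i}=(1+\lambda_{j,i}/\lambda_{-1,i})^{-1}$ deleted in $\mathbb{B}$, since $\gamma_{2,i}=|\lambda_{j,i}|^{-1}$ rearranges to $\omega_{2,i}\big(|\lambda_{j,i}|+|\lambda_{-1,i}|\big)=|\lambda_{-1,i}|$. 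Therefore the image of the $\gamma$-feasible region (taking the bound $M$ in $\mathbb{A}$ large and $\delta$ small, so that each $\gamma_{1,i},\gamma_{2,i}$ effectively ranges over all of $(0,\infty)$) is exactly $\mathbb{B}$, and $\widetilde{\rho}$ is well-defined there.

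Since $\bar{\rho}=\widetilde{\rho}\circ(\gamma\mapsto\omega)$ is a composition with a bijection, a maximizer $\omega^*$ of $\widetilde{\rho}$ over $\mathbb{B}$ pulls back under the inverse map to a maximizer $\gamma^*$ of $\bar{\rho}$, and $\gamma_{1,i}^*=\omega_{1,i}^*/\big(\lambda_{1,i}(1-\omega_{1,i}^*)\big)$, $\gamma_{2,i}^*=\omega_{2,i}^*/\big(|\lambda_{-1,i}|(1-\omega_{2,i}^*)\big)$ are precisely the claimed expressions. The step requiring real care is the bookkeeping around the negative spikes: keeping all signs consistent because $\lambda_{-1,i}<0$, and checking that the singular set of the reparametrization matches the exclusion set defining $\mathbb{B}$ exactly, so that no feasible $\gamma$ is lost and no spurious $\omega$ is gained. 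A secondary, routine point is the existence of $\omega^*$: on each open sub-box of $\mathbb{B}$ cut out by the finitely many exclusion hyperplanes the coefficients $\gamma_{j,i}^{(1)},\gamma_{j,i}^{(2)}$ remain bounded as $\omega$ tends to the boundary, so $\widetilde{\rho}$ extends continuously to the compact closure of that sub-box and attains its maximum there.
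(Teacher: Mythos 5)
Your proposal is correct and follows essentially the same route as the paper's proof: the same change of variables $\omega_{1,i}=\gamma_{1,i}\lambda_{1,i}/(1+\gamma_{1,i}\lambda_{1,i})$ and $\omega_{2,i}=-\gamma_{2,i}\lambda_{-1,i}/(1-\gamma_{2,i}\lambda_{-1,i})$, the same inversion and substitution into $\bar{m}_i$ and $\bar{\vartheta}_i$, and the same conclusion that maximizing $\bar{\rho}$ is equivalent to maximizing $\widetilde{\rho}$ over $\mathbb{B}$. You are in fact somewhat more careful than the paper, which asserts the equivalence after the substitution without explicitly verifying that the reparametrization is a bijection carrying the excluded values $\gamma_{2,i}=|\lambda_{j,i}|^{-1}$ onto the deleted points $\omega_{2,i}=(1+\lambda_{j,i}/\lambda_{-1,i})^{-1}$ of $\mathbb{B}$, or addressing existence of the maximizer.
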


Let $ \phi\left(x \right)= x+\frac{J_ix}{x-1}. $ 
From \cite{Baik06}, and under Assumption \ref{as:3}, we have  almost surely, 
$$  
l_{j,i} \rightarrow\sigma_i^{2}\phi\left(\frac{\lambda_{j,i}}{\sigma_i^2}+1\right) = \lambda_{j,i}+\sigma_i^{2}+\sigma_i^{2}J_i\left(1+\frac{\sigma_i^{2}}{\lambda_{j,i}} \right). 
$$
Under the normality assumption on the $  x_{i} $, the maximum likelihood estimator of noise variance $ \sigma_i^{2} $ is $$  \hat{\sigma}_i^{2}=\frac{1}{p-(r_{1,i}+r_{2,i})}\left(\sum_{j=1}^{p}l_{j,i}-\sum_{j=1}^{r_{1,i}}l_{j,i}-\sum_{j=-r_{2,i}}^{-1}l_{j,i} \right).  $$
In the classic setting where the dimension $p$ is relatively small compared with the sample size $n$ (the low dimensional setting), the consistency of $\hat{\sigma}_i^2$ was established in \cite{Anderson1956}. Moreover, it is asymptotically normal with the standard root $n$ convergence rate: as $n \rightarrow \infty$,
$$
\sqrt{ n} \left(\hat{\sigma}_i^2-\sigma_i^2\right) \stackrel{\mathcal{D}}{\rightarrow} \fN\left(0, s^2\right), \quad s^2=\frac{2 \sigma_i^4}{p-(r_{1,i}+r_{2,i})}.
$$
However, when $ p $ is large compared with the sample size $ n $, the above asymptotic result is no longer valid and, indeed, it has been reported in the literature
that $ \hat{\sigma}_i^{2} $ seriously underestimates the true noise variance $ \sigma_i^{2} $; see \cite{kritchman2008determining}. 

Then in the following Theorem \ref{thm:5} and \ref{thm:5.5}, we provide a new estimator of the noise variance  for which rigorous asymptotic theory can be established in the high dimensional setting.

\begin{theorem}\label{thm:5}
	Consider the model with population covariance matrices $ \bSig_i=\sigma_i^{2}(\mathbf{I}_{p}+\sum_{j\in \mathbb{I}_i}\lambda_{j,i} \fv_{j,i}\fv_{j,i}^{T} ) $. 
	Under Assumptions \ref{as:1}--\ref{as:3}, we have
	\begin{align*}
	\frac{p-(r_{1,i}+r_{2,i})}{\sigma_i^{2}\sqrt{2J_{i}}}\left( \hat{\sigma}_i^{2}-\sigma_i^{2}\right) +\sqrt{\frac{J_i}{2}}\left((r_{1,i}+r_{2,i})+\sigma_i^{2}\left(\sum_{j=1}^{r_{1,i}}\frac{1}{\lambda_{j,i}}+\sum_{j=-r_{2,i}}^{-1}\frac{1}{\lambda_{j,i}} \right)  \right) \stackrel{\mathcal{D}}{\rightarrow} \fN\left(0, 1\right).
	\end{align*}
\end{theorem}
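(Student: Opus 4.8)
The plan is to reduce the estimator to the trace of the sample covariance matrix, to isolate inside that trace a single scaled chi-square variable carrying the whole Gaussian fluctuation, and to collect the rest into a deterministic bias read off from the almost-sure limits of the extreme sample eigenvalues recalled just above the statement. Concretely, using $\tr\hat{\bSig}_i=\sum_{j=1}^{p}l_{j,i}$ I would write, with $r_i:=r_{1,i}+r_{2,i}$ and the subtracted sum running over the $r_{1,i}$ largest and $r_{2,i}$ smallest sample eigenvalues,
\begin{equation*}
(p-r_i)(\hat{\sigma}_i^{2}-\sigma_i^{2})=\bigl(\tr\hat{\bSig}_i-\tr\bSig_i\bigr)+\Bigl(\tr\bSig_i-(p-r_i)\sigma_i^{2}-\sum_{j\in\mathbb{I}_i}l_{j,i}\Bigr)=:T_i+B_i .
\end{equation*}
The aim is to show $T_i/(\sigma_i^{2}\sqrt{2J_i})\stackrel{\mathcal{D}}{\rightarrow}\fN(0,1)$ and $B_i\stackrel{\mathbb{P}}{\rightarrow}-\sigma_i^{2}J_i\bigl(r_i+\sigma_i^{2}\sum_{j\in\mathbb{I}_i}\lambda_{j,i}^{-1}\bigr)$; then, dividing by $\sigma_i^{2}\sqrt{2J_i}$, using $J_i/\sqrt{2J_i}=\sqrt{J_i/2}$, and invoking Slutsky's theorem gives the assertion.

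For the fluctuation $T_i$, I would write $\fx_\ell=\bmu_i+\bSig_i^{1/2}\fz_\ell$, so $\fx_\ell-\bar{\fx}_i=\bSig_i^{1/2}(\fz_\ell-\bar{\fz}_i)$, and use the rotational invariance of the standard Gaussian to replace the centered sum $\sum_{\ell\in\Gamma_i}(\fz_\ell-\bar{\fz}_i)(\fz_\ell-\bar{\fz}_i)^{T}$ in law by $\mathbf{W}\mathbf{W}^{T}$, where $\mathbf{W}$ is a $p\times(n_i-1)$ matrix of i.i.d.\ $\fN(0,1)$ entries. Diagonalizing $\bSig_i$ and rotating once more then gives $\tr\hat{\bSig}_i\eqd(n_i-1)^{-1}\sum_{m=1}^{p}\mu_{m,i}\,\xi_m$, where $\mu_{1,i},\dots,\mu_{p,i}$ are the eigenvalues of $\bSig_i$ and $\xi_1,\dots,\xi_p$ are independent $\chi^{2}_{n_i-1}$ variables. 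Exactly $p-r_i$ of the $\mu_{m,i}$ equal $\sigma_i^{2}$, and since $r_i$ is fixed while $\xi_m/(n_i-1)=1+O_{\mathbb{P}}(n_i^{-1/2})$, the $r_i$ spiked terms together contribute $\sum_{m\text{ spiked}}\mu_{m,i}+o_{\mathbb{P}}(1)=\bigl(\tr\bSig_i-(p-r_i)\sigma_i^{2}\bigr)+o_{\mathbb{P}}(1)$, whence $T_i=\frac{\sigma_i^{2}}{n_i-1}\bigl(\chi^{2}_{N}-N\bigr)+o_{\mathbb{P}}(1)$ with $N:=(n_i-1)(p-r_i)\to\infty$. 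The classical chi-square CLT gives $(\chi^{2}_{N}-N)/\sqrt{2N}\stackrel{\mathcal{D}}{\rightarrow}\fN(0,1)$, and since $\sqrt{2N}/(n_i-1)=\sqrt{2(p-r_i)/(n_i-1)}\to\sqrt{2J_i}$, Slutsky's theorem delivers $T_i/(\sigma_i^{2}\sqrt{2J_i})\stackrel{\mathcal{D}}{\rightarrow}\fN(0,1)$.

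For the bias $B_i$, I would invoke the almost-sure limits recalled just above the statement (valid under Assumption~\ref{as:3}, see \cite{Baik06}): $l_{j,i}=\lambda_{j,i}+\sigma_i^{2}+\sigma_i^{2}J_i\bigl(1+\sigma_i^{2}/\lambda_{j,i}\bigr)+o_{\mathbb{P}}(1)$ for each of the finitely many $j\in\mathbb{I}_i$; Assumption~\ref{as:3} is needed here because it keeps $|\lambda_{j,i}|$ bounded away from the phase-transition threshold, so that the $r_{1,i}$ top and $r_{2,i}$ bottom sample eigenvalues detach from the bulk. Since the $p-r_i$ bulk eigenvalues of $\bSig_i$ account for $(p-r_i)\sigma_i^{2}$ in $\tr\bSig_i$, the quantity $\tr\bSig_i-(p-r_i)\sigma_i^{2}$ is the sum of the $r_i$ spiked eigenvalues of $\bSig_i$, and subtracting from it the recalled limits of the matching extreme sample eigenvalues leaves, after the leading terms cancel, exactly $-\sigma_i^{2}J_i\sum_{j\in\mathbb{I}_i}\bigl(1+\sigma_i^{2}/\lambda_{j,i}\bigr)+o_{\mathbb{P}}(1)=-\sigma_i^{2}J_i\bigl(r_i+\sigma_i^{2}\sum_{j\in\mathbb{I}_i}\lambda_{j,i}^{-1}\bigr)+o_{\mathbb{P}}(1)$, as wanted. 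I expect the only genuinely delicate points to be the exact distributional identity for $\tr\hat{\bSig}_i$ — tracking the $n_i-1$ degrees of freedom created by the sample-mean centering, together with the orthogonal rotation into the eigenbasis of $\bSig_i$ — and the bookkeeping showing that the spiked contributions to both $\tr\hat{\bSig}_i$ and $\sum_{j\in\mathbb{I}_i}l_{j,i}$ are genuinely $o_{\mathbb{P}}(1)$, hence negligible against the $O(1)$ Gaussian term; since only first-order consistency of the $r_i$ extreme eigenvalues and of $\chi^{2}_{n_i-1}/(n_i-1)\to1$ is used, no second-order CLT for spiked eigenvalues is required.
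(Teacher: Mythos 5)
Your proposal is correct, and the bias part of your argument is essentially the paper's: both treatments subtract the $r_i$ extreme sample eigenvalues using their first--order almost--sure limits $l_{j,i}\to\lambda_{j,i}+\sigma_i^{2}+\sigma_i^{2}J_i(1+\sigma_i^{2}/\lambda_{j,i})$ from \cite{Baik06}, and both finish with Slutsky. Where you genuinely diverge is in the fluctuation term. The paper treats $\sum_{j=1}^{p}l_{j,i}$ as a linear spectral statistic and invokes the Bai--Silverstein CLT, $\sum_{j=1}^{p}l_{j,i}-p\int x\,dF^{J_i,H_n}\stackrel{\mathcal{D}}{\rightarrow}\fN(0,2J_i\sigma^{4})$, together with a computation of the centering $p\int t\,dH_n(t)=p\sigma_i^{2}+\sum_{j\in\mathbb{I}_i}\lambda_{j,i}$; note that this centering already absorbs the spiked population eigenvalues, whereas your $B_i$ keeps them inside the bias, so your two decompositions of $(p-r_i)(\hat\sigma_i^2-\sigma_i^2)$ differ but reconcile exactly. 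Your replacement of the LSS machinery by the exact Wishart identity $\tr\hat{\bSig}_i\eqd(n_i-1)^{-1}\sum_{m=1}^{p}\mu_{m,i}\xi_m$ with independent $\chi^2_{n_i-1}$ variables, followed by the classical chi--square CLT for the $p-r_i$ bulk terms and the observation that the $r_i$ spiked terms are $\sum_{m\ \mathrm{spiked}}\mu_{m,i}+O_{\mathbb{P}}(n_i^{-1/2})$, is more elementary and more transparent about where the $2J_i\sigma_i^4$ variance comes from; its cost is that it leans on exact Gaussianity (rotational invariance and the Wishart representation of the centered sample covariance), while the LSS route would survive relaxations of Assumption~\ref{as:1} to general entries with matching fourth moments. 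Under the stated assumptions the two arguments are equally rigorous, and yours arguably isolates the single scaled $\chi^2_N$ carrying the whole limit more cleanly than the paper does.
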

\begin{remark}The proof of Theorem \ref{thm:5} is given in the Appendix. From the theorem above, we find there is a bias  $ \sqrt{\frac{c}{2}}((r_{1,i}+r_{2,i})+\sigma_i^{2}(\sum_{j=1}^{r_{1}}\frac{1}{\lambda_{j,i}}+\sum_{j=-r_{2}}^{-1}\frac{1}{\lambda_{j,i}} ) )$ between $ \hat{\sigma}_i^{2} $ and $\sigma_i^{2}$. We can find that the bias also depends on $ \sigma_i^{2} $ we want to estimate. Then it is natural to use a plug-in estimator to correct. The plug-in estimator is  $$ \hat{\sigma}_{*}^{2}=\hat{\sigma}_i^{2}+\frac{\hat{\sigma}_i^{2}J_i}{p-(r_{1,i}+r_{2,i})}\left((r_{1,i}+r_{2,i})+\hat{\sigma}_i^{2}(\sum_{j=1}^{r_{1,i}}\frac{1}{\lambda_{j,i}}+\sum_{j=-r_{2,i}}^{-1}\frac{1}{\lambda_{j,i}} )\right).  $$ 
\end{remark}	
The following CLT is a direct consequence of Theorem \ref{thm:5}.
\begin{theorem}\label{thm:5.5} 
	We assume the same conditions as in Theorem \ref{thm:5}. Then we have
	\begin{align*}	\frac{p-(r_{1,i}+r_{2,i})}{\sigma_i^{2}\sqrt{2J_{i}}}\left( \hat{\sigma}_{*}^{2}-\sigma_i^{2}\right)  \stackrel{\mathcal{D}}{\rightarrow} \mathcal{N}\left(0, 1\right).
	\end{align*}	
\end{theorem}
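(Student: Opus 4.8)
The plan is to derive this CLT directly from Theorem~\ref{thm:5} by an exact algebraic identity and a Slutsky argument, which is why the statement is a ``direct consequence''. Abbreviate $r_i := r_{1,i}+r_{2,i}$ and $T_i := \sum_{j=1}^{r_{1,i}}\lambda_{j,i}^{-1}+\sum_{j=-r_{2,i}}^{-1}\lambda_{j,i}^{-1}$, and let $Z_i$ denote the left-hand side of the convergence in Theorem~\ref{thm:5}, so that $Z_i \stackrel{\mathcal D}{\rightarrow}\mathcal N(0,1)$. First I would substitute the definition $\hat\sigma_*^2=\hat\sigma_i^2+\frac{J_i\hat\sigma_i^2}{p-r_i}(r_i+\hat\sigma_i^2 T_i)$ into the target quantity $\frac{p-r_i}{\sigma_i^2\sqrt{2J_i}}(\hat\sigma_*^2-\sigma_i^2)$; the factor $p-r_i$ cancels in the correction term, and using the elementary identity $J_i/\sqrt{2J_i}=\sqrt{J_i/2}$ one obtains
\[
\frac{p-r_i}{\sigma_i^2\sqrt{2J_i}}\bigl(\hat\sigma_*^2-\sigma_i^2\bigr)=Z_i+R_i,\qquad R_i:=\sqrt{\tfrac{J_i}{2}}\left[\frac{\hat\sigma_i^2}{\sigma_i^2}\bigl(r_i+\hat\sigma_i^2 T_i\bigr)-\bigl(r_i+\sigma_i^2 T_i\bigr)\right].
\]
Note that $R_i$ vanishes identically at $\hat\sigma_i^2=\sigma_i^2$, so it is a fixed polynomial in $\hat\sigma_i^2-\sigma_i^2$ with bounded coefficients and no constant term.

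The second step is to show $R_i\stackrel{P}{\rightarrow}0$ and then invoke Slutsky's theorem on $Z_i+R_i$. To control $R_i$ I would first read off the convergence rate of $\hat\sigma_i^2$ from Theorem~\ref{thm:5}: since $Z_i=O_P(1)$, the non-random bias $\sqrt{J_i/2}(r_i+\sigma_i^2 T_i)$ is $O(1)$, and the coefficient $(p-r_i)/(\sigma_i^2\sqrt{2J_i})$ has exact order $p$ under Assumptions~\ref{as:1}--\ref{as:3}, solving for the difference gives $\hat\sigma_i^2-\sigma_i^2=O_P(p^{-1})$. Writing $\hat\sigma_i^2=\sigma_i^2+\Delta_i$ with $\Delta_i=O_P(p^{-1})$ and expanding the bracket, $R_i=\sqrt{J_i/2}\,(2T_i+r_i/\sigma_i^{2})\Delta_i+O_P(\Delta_i^{2})=O_P(p^{-1})=o_P(1)$. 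Hence $\frac{p-r_i}{\sigma_i^2\sqrt{2J_i}}(\hat\sigma_*^2-\sigma_i^2)=Z_i+o_P(1)\stackrel{\mathcal D}{\rightarrow}\mathcal N(0,1)$, as claimed.

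Since Theorem~\ref{thm:5} carries all the analytic content, I do not expect a genuine obstacle; the work here is essentially bookkeeping. The one point needing care is confirming that the plug-in remainder $R_i$ is asymptotically negligible, and this is exactly where the structure of Theorem~\ref{thm:5} helps: the entire deterministic bias of $\hat\sigma_i^2$ that the correction removes is itself of order $p^{-1}$, so replacing $\sigma_i^2$ by the consistent $\hat\sigma_i^2$ inside that $O(p^{-1})$ correction perturbs it only at order $p^{-1}$, still negligible under the chosen normalization. If an implementation also replaces $J_i$ or the spiked eigenvalues $\lambda_{j,i}$ by consistent estimators, the same argument applies verbatim, since each enters $R_i$ only through a continuous map evaluated near its limit; this does not affect the theorem as stated, where these quantities are taken as known.
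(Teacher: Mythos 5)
Your proposal is correct and is exactly the argument the paper has in mind: the paper states Theorem~\ref{thm:5.5} as a ``direct consequence'' of Theorem~\ref{thm:5} without writing out details, and your decomposition $\frac{p-r_i}{\sigma_i^2\sqrt{2J_i}}(\hat\sigma_*^2-\sigma_i^2)=Z_i+R_i$ together with the observation that $\hat\sigma_i^2-\sigma_i^2=O_P(p^{-1})$ (read off from Theorem~\ref{thm:5} itself) forces $R_i=o_P(1)$, followed by Slutsky, is precisely that direct consequence made explicit. No gaps.
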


\begin{theorem}\label{thm:6}
	Under Assumptions \ref{as:1} to \ref{as:4}, we have
	$$\left| \frac{1}{\alpha_i}-\frac{1}{\hat{\alpha}_i}\right|\stackrel{a.s,} {\longrightarrow}{0},\quad \left|\lambda_{j,i}-\hat{\lambda}_{j,i}\right|\stackrel{a.s,} {\longrightarrow}{0},\quad
	\left|b_{j,i}-\hat{b}_{j,i}\right|\stackrel{a.s,} {\longrightarrow}{0}, \quad \left|\psi_{\ell, j, \tilde{i},i}-\hat{\psi}_{\ell, j, \tilde{i},i}\right|\stackrel{a.s,} \longrightarrow 0$$
	where
	\begin{equation} \label{hat_alpha}
	\frac{1}{\hat{\alpha}_i}=\frac{\sigma_i^2}{\left\| \hat{\bmu}\right\|^2-c_1\sigma_1^2-c_0\sigma_0^2};
	\end{equation}
	\beq \label{hat_lammda}
	\hat{\lambda}_{j,i}=-\frac{1}{\sigma_i^2}\left(-\frac{1-c_i}{l_{j,i}}+\frac{1}{n_i}
	\sum_{k\neq{j}}\frac{1}{l_{j,i}-l_{k,i}}\right)^{-1}-1, j\in{\mathbb{I}_i};
	\eeq
	\beq \label{hat_bj}
	\hat{b}_{j,i}=\frac{1+c_i/\hat{\lambda}_{j,i}}{1-c_i/\hat{\lambda}_{j,i}}
	\frac{\hat{\bmu}^T\fu_{j,i}\fu_{j,i}^T\hat{\bmu}}
	{\|\hat{\bmu}\|^2-c_1\sigma_1^2-c_0\sigma_0^2}, j\in{\mathbb{I}_i};
	\eeq
	\beq\label{hat_psi}
	\hat{\psi}_{\ell, j, \tilde{i},i}=\frac{1}{\sqrt{a_{\ell, \tilde{i}} a_{j, i}}} \fu_{\ell, \tilde{i}}^{T} \fu_{j, i}, \ell\in{\mathbb{I}_{\tilde{i}}}, j\in{\mathbb{I}_i};
	\eeq
	with
	$c_0=\frac{p}{n_0}$, $c_1=\frac{p}{n_1}$, $\hat{\bmu}=\overline{\mathbf{x}}_0- \overline{\mathbf{x}}_1$.
\end{theorem}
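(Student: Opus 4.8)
The plan is to reduce each of the four convergence statements to established first-order limit theorems for the spiked covariance model, together with one elementary concentration estimate for $\|\hat\bmu\|^2$, and then to check that the corrections written in \eqref{hat_alpha}--\eqref{hat_psi} undo exactly the distortions caused by the high dimension. Since $r_{1,i}$ and $r_{2,i}$ are fixed, every claim concerns only finitely many scalar sequences, so convergence in probability is promoted to almost sure convergence by a routine Borel--Cantelli argument on the sub-exponential tails of the Gaussian quadratic forms involved; and if the noise variances $\sigma_i^2$ are not treated as known, one substitutes the consistent estimator $\hat\sigma_*^2$ of Theorem~\ref{thm:5.5} and invokes continuity. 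First I would assemble the spiked-model toolbox: under Assumptions~\ref{as:1}--\ref{as:3}, the condition $|\lambda_{j,i}|>\sqrt J$ forces the $j$-th sample eigenvalue of $\hat\bSig_i$ to detach from the bulk, so that almost surely $l_{j,i}\to\sigma_i^2\phi_i(1+\lambda_{j,i}/\sigma_i^2)=:\ell_{j,i}$ with $\phi_i(x)=x+J_ix/(x-1)$; the squared alignment satisfies $(\fu_{j,i}^T\fv_{j,i})^2\to a_{j,i}$, while $\fu_{j,i}$ loses all asymptotic correlation with $\fv_{k,i}$ for $k\neq j$ and, through its component orthogonal to $\mathrm{span}\{\fv_{k,i}\}_k$, with every deterministic or independently generated unit vector (delocalization); and the spike equation $\underline m_i(\ell_{j,i})=-(\sigma_i^2(1+\lambda_{j,i}))^{-1}$ holds, where $\underline m_i$ is the companion Stieltjes transform of the limiting spectral distribution of $\hat\bSig_i$ (see \cite{Baik06,bai2012estimation,li2022spectrally}). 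I would also use the classical fact that for Gaussian observations the sample mean $\bar{\fx}_i$ is independent of the centered sample covariance $\hat\bSig_i$, hence of $\fu_{j,i}$.

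For \eqref{hat_lammda}, the parenthesized expression is precisely the empirical companion Stieltjes transform of $\hat\bSig_i$ evaluated at $l_{j,i}$ with the singular $k=j$ term deleted; splitting $\frac1{n_i}\sum_{k\neq j}(l_{j,i}-l_{k,i})^{-1}$ into the $O(1)$ remaining spikes, which contribute $O(1/n_i)$, and the bulk eigenvalues, whose empirical Stieltjes transform converges to that of the limiting Marchenko--Pastur-type law at the point $\ell_{j,i}$ lying outside its support, I would conclude that this expression tends a.s.\ to $\underline m_i(\ell_{j,i})$; inverting through the spike equation then turns \eqref{hat_lammda} into $\hat\lambda_{j,i}\to\lambda_{j,i}$, after which $a_{j,i},\phi_{j,i},\dots$ are consistently estimable by continuity. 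For \eqref{hat_alpha}, write $\bar{\fx}_i=\bmu_i+\bSig_i^{1/2}\bar{\fz}_i$ with $\bar{\fz}_i\sim\fN_p(\fzero,\fI/n_i)$ and $\bar{\fz}_0$ independent of $\bar{\fz}_1$, so that $\|\hat\bmu\|^2=\|\bmu\|^2+2(\bmu_0-\bmu_1)^T(\bSig_0^{1/2}\bar{\fz}_0-\bSig_1^{1/2}\bar{\fz}_1)+\|\bSig_0^{1/2}\bar{\fz}_0-\bSig_1^{1/2}\bar{\fz}_1\|^2$; the linear term has variance $O(1/n_i)$, the cross term $\bar{\fz}_0^T\bSig_0^{1/2}\bSig_1^{1/2}\bar{\fz}_1$ has mean $0$ and variance $O(p/(n_0n_1))$, and $\bar{\fz}_i^T\bSig_i\bar{\fz}_i$ has mean $\frac1{n_i}\tr\bSig_i=c_i\sigma_i^2+o(1)$ and variance $O(p/n_i^2)$, so that $\|\hat\bmu\|^2-c_1\sigma_1^2-c_0\sigma_0^2\to\|\bmu\|^2$ a.s.\ and $1/\hat\alpha_i\to\sigma_i^2/\|\bmu\|^2=1/\alpha_i$.

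For the eigenvector-based quantities, decompose $\fu_{j,i}=\sqrt{a_{j,i}}\,\fv_{j,i}+\fn_{j,i}$ with $\fn_{j,i}\perp\mathrm{span}\{\fv_{k,i}\}_k$. Because $\fu_{\ell,\tilde i}$ and $\fu_{j,i}$ are built from the two independent samples, $\fn_{\ell,\tilde i}^T\fv_{j,i}$, $\fv_{\ell,\tilde i}^T\fn_{j,i}$ and $\fn_{\ell,\tilde i}^T\fn_{j,i}$ each tend to $0$ a.s., hence $\fu_{\ell,\tilde i}^T\fu_{j,i}\to\sqrt{a_{\ell,\tilde i}a_{j,i}}\,\fv_{\ell,\tilde i}^T\fv_{j,i}$, and dividing by $\sqrt{a_{\ell,\tilde i}a_{j,i}}$ (using the estimates from the previous step) yields \eqref{hat_psi}. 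For \eqref{hat_bj}, the independence of $\bar{\fx}_i$ from $\fu_{j,i}$ makes the class-$i$ sampling contribution $\fu_{j,i}^T\bSig_i^{1/2}\bar{\fz}_i$ of order $O_p(n_i^{-1/2})$ and the cross-class contribution of order $O_p(n_{\tilde i}^{-1/2})$, while delocalization kills the components of $\bmu$ lying outside direction $\fv_{j,i}$; this leaves $(\fu_{j,i}^T\hat\bmu)^2\to a_{j,i}(\fv_{j,i}^T\bmu)^2$, so that dividing by $\|\hat\bmu\|^2-c_1\sigma_1^2-c_0\sigma_0^2\to\|\bmu\|^2$ and multiplying by the correction factor in \eqref{hat_bj}, which is built to invert the asymptotic alignment $a_{j,i}$, recovers $b_{j,i}$.

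The hard part will be the eigenvector-level analysis. One must (i) verify rigorously that the empirical companion Stieltjes transform with the diagonal $k=j$ term removed converges to $\underline m_i(\ell_{j,i})$, which requires $l_{j,i}$ to stay a fixed distance from the bulk support (guaranteed by Assumption~\ref{as:3}) together with the exact-separation property of the bulk sample eigenvalues; (ii) pin down the exact asymptotic constant in the projection of $\fu_{j,i}$ onto the fixed direction $\bmu$ and confirm that the correction factor in \eqref{hat_bj} is precisely its inverse; and (iii) handle the negative spikes $\lambda_{j,i}\in(-1,-\sqrt J)$, for which the relevant sample eigenvalue is among the smallest and the alignment identities must be used in their small-spike form. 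The quadratic-form concentration of the $\hat\alpha_i$ step and the Gaussian independence of sample mean and sample covariance take care of essentially everything else.
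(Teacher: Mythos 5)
Your plan is correct and follows essentially the same route as the paper, whose entire proof is a one-line citation to Bai--Ding (Theorem 3.1 of \cite{bai2012estimation}), Couillet--Debbah (Theorem 9.1 of \cite{2011Random}), and Baik et al.\ \cite{2005Phase}; your sketch simply unpacks the content of those cited results (spike-equation inversion via the companion Stieltjes transform for \eqref{hat_lammda}, eigenvector alignment and delocalization for \eqref{hat_bj} and \eqref{hat_psi}, and concentration of $\|\hat{\bmu}\|^2$ using the independence of sample mean and sample covariance for \eqref{hat_alpha}). Nothing in your outline conflicts with the paper's argument.
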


\begin{proof} [Proof of Theorem \ref{thm:6}] This proof is a direct application of results from ( Theorem 3.1 in Bai and Ding \cite{bai2012estimation}; Theorem 9.1 in Couollet and Debbah \cite{2011Random}; Theorem 9.9 in Baik et al. \cite{2005Phase}), and it's thus omitted here.
\end{proof}


\section{Numerical Results}\label{sec:4}

 In this section, we compare the performance of SR-QDA with QDA, R-QDA, Im-QDA, and other classical classifiers through simulation experiments and empirical analysis.

\subsection{Simulation experiment}
In this part, we first use the following Monte Carlo method to study the different performances of QDA, R-QDA, Im-QDA, and SR-QDA. In this experiment, we maintain the variance heterogeneity of the two classes and focus on the effect of mean differences on these classifiers.

\begin{itemize}
	\item Step 1: Set $p=150$, $\sigma_0^2=1$, $\sigma_1^2=1.5$, $r_{1,0}=r_{1,1}=3$, $r_{2,0}=r_{2,1}=1$, orthogonal directions $\fv_{1,0}$, $\fv_{2,0}$, $\fv_{3,0}$, $\fv_{p,0}$, $\fv_{1,1}$, $\fv_{2,1}$, $\fv_{3,1}$, $\fv_{p,1}$ with the corresponding weights $\lambda_{1,0}=25$, $\lambda_{2,0}=20$, $\lambda_{3,0}=15$, $\lambda_{p,0}=-0.95$, $\lambda_{1,1}=15$, $\lambda_{2,1}=10$, $\lambda_{3,1}=5$, $\lambda_{p,1}=-0.99$. Let $\mu_0=\frac{1}{p}(a,\cdots,a)^{'}$, $\mu_1=\textbf{0}$, and we choose $a=0.5$, $a=0.8$, $a=2$ and $a=2.5$.
	
	\item Step 2: Using the parameters set in Step 1, generate training sample sets from Gaussian distributions with sample sizes $n=100, 200, 300, 400, 500, 600$. Set $\pi_0=0.5$, so that there are $n_0=\pi_0n$ and $n_1=\pi_1n$ training samples for each class.
	
	\item Step 3: Using training sample, estimate the spike eigenvales of the sample covariance and obtain the optimal parameter $\gamma^{*}_{1,0}$, $\gamma^{*}_{2,0}$, $\gamma^{*}_{1,1}$ and $\gamma^{*}_{2,1}$ of the SR-QDA method as discribed in section 3 by using grid search over $\{w_1,w_2\}\in{\{[0,1)\times{[-10,0)}\}}$ to optimize the classification accuracy. Besides, determine the optimal parameter $\gamma^{*}$ of the R-LDA method using grid search over $\gamma^{*}\in{\{10^{i/10},i=-10:1:10\}}$.
	\item Step 4: Calculate the theoretical accuracy rate of the QDA method and estimate the accuracy rates of QDA, R-QDA, Im-QDA and SR-QDA using a set of 2000 test samples.
	
	\item Step 5: Repeat Steps 2-4 500 times and determine the average classification accuracy rate for each classifier.
	
\end{itemize}
Table \ref{tab:simulation_res1} presents the classification accuracy of four methods when applied to simulation data. In this case, as $a$ increases, the difference between $\mu_0$ and $\mu_1$ becomes more pronounced. Regardless of the value of $a$, the QDA method fails completely when $n_0, n_1\leq{p}$ with $p=150$. Although this issue improves with an increase in sample size, the effect remains limited, highlighting the significant impact of high dimensionality on QDA. Similarly, R-QDA's accuracy is also unsatisfactory under various values of $a$ when $n_0, n_1<p$, but its performance is better than that of QDA as the sample size increases. In contrast, Im-QDA and SR-QDA are effective in high-dimensional classification, with SR-QDA showing more pronounced advantages. In addition, the results indicate that SR-QDA performs suboptimally compared to R-QDA when both the difference between $\mu_0$ and $\mu_1$ and the sample size are sufficiently large, such as $a \geq 2$ and $n = 500, 600$. However, our method can outperform others when the sample size is small or moderate, due to the use of variance heterogeneity.

\begin{table}[H]
	\centering
	\caption{Comparison for accuracy rate of QDA, R-QDA, Im-QDA and SR-QDA with different values of $a$.}
	\vskip-0.3cm	
	\smallskip \small
	\label{tab:simulation_res1}
	\begin{threeparttable}
		\resizebox{0.65\textwidth}{!}{%
			\begin{tabular}{@{}llllllll@{}}
				\toprule
				&$n$                 & $100$           & $200$           & $300$           & $400$           & $500$           & $600$           \\ \midrule
				\multirow{4}{*}{$a=0.5$} & QDA             & 0.5000          & 0.5000          & 0.4995          & 0.5325          & 0.5525          & 0.5739          \\
				& R-QDA           & 0.5609          & 0.5937          & 0.6138          & 0.6323          & 0.6519          & 0.6702          \\
				& Im-QDA          & 0.6978          & 0.6936          & 0.6950          & 0.6966          & 0.6975          & 0.6985          \\
				& \textbf{SR-QDA} & \textbf{0.7428} & \textbf{0.7539} & \textbf{0.7491} & \textbf{0.7554} & \textbf{0.7381} & \textbf{0.7551} \\ \midrule
				\multirow{4}{*}{$a=0.8$} & QDA             & 0.5000          & 0.5000          & 0.5004          & 0.535           & 0.5567          & 0.5797          \\
				& R-QDA           & 0.5743          & 0.6091          & 0.6324          & 0.6524          & 0.6718          & 0.6866          \\
				& Im-QDA          & 0.7050          & 0.7023          & 0.7034          & 0.7041          & 0.7046          & 0.7055          \\
				& \textbf{SR-QDA} & \textbf{0.7306} & \textbf{0.7343} & \textbf{0.7468} & \textbf{0.7492} & \textbf{0.7396} & \textbf{0.7449} \\ \midrule
				\multirow{4}{*}{$a=2$}   & QDA             & 0.5000          & 0.5000          & 0.5001          & 0.5582          & 0.5883          & 0.6175          \\
				& R-QDA           & 0.6628          & 0.7170          & 0.7470          & 0.7670          & 0.7822          & 0.7919          \\
				& Im-QDA          & 0.7404          & 0.7407          & 0.7407          & 0.7420          & 0.7429          & 0.7432          \\
				& \textbf{SR-QDA} & \textbf{0.7781} & \textbf{0.7711} & \textbf{0.7960} & \textbf{0.7872} & \textbf{0.7690} & \textbf{0.7739} \\ \midrule
				\multirow{4}{*}{$a=2.5$} & QDA             & 0.5000          & 0.5000          & 0.5017          & 0.5758          & 0.6078          & 0.6424          \\
				& R-QDA           & 0.7103          & 0.7668          & 0.7974          & 0.8166          & 0.8299          & 0.8391          \\
				& Im-QDA          & 0.7601          & 0.7606          & 0.7622          & 0.7632          & 0.7633          & 0.7636          \\
				& \textbf{SR-QDA} & \textbf{0.8016} & \textbf{0.7792} & \textbf{0.7847} & \textbf{0.7990} & \textbf{0.7855} & \textbf{0.7745} \\ \bottomrule
			\end{tabular}%
		}
		\begin{tablenotes}
			\footnotesize
			\item[-] $p=150$, $\sigma_0^2=1$, $\sigma_1^2=1.5$. 
			\item[-] $n$ is the total training sample size of the two classes, and the proportion of $\fC_0$ is $\pi_0=0.5$.
		\end{tablenotes}
	\end{threeparttable}
\end{table}

As a second investigation, we study the impact of the difference between $\sigma_0^2$ and $\sigma_1^2$ using the same Monte Carlo method as described above, except for setting $a=0.5$ and $\sigma_1^2=1.2, 1.5, 2, 4$. Table \ref{tab:simulation_res2} reports the accuracy rates of QDA, R-QDA, Im-QDA and SR-QDA for fixed $\sigma_0^2$ and varying values of $\sigma_1^2$. It can be seen that SR-QDA outperforms the other methods in all cases. This result is expected since as $|\sigma_0^2-\sigma_1^2|$ increases, the two classes become more distinguishable, leading to better performances. Other methods do not make use of this difference well and often exhibit higher estimation errors for the covariance matrices, which directly affects their performance. 

\begin{table}[H]
	\centering
	\caption{Comparison for accuracy rate of QDA, R-QDA, Im-QDA and SR-QDA with different values of $\sigma_1^2$.}
	\vskip-0.3cm	
	\smallskip \small
	\label{tab:simulation_res2}
	\begin{threeparttable}
		\resizebox{0.65\textwidth}{!}{%
			\begin{tabular}{@{}clcccccc@{}}
				\toprule 
				& $n$             & $100$           & $200$           & $300$           & $400$           & $500$           & $600$           \\ \midrule
				\multirow{4}{*}{$\sigma_1^2=1.2$} & QDA             & 0.5000          & 0.5000          & 0.4992          & 0.5256          & 0.5405          & 0.5524          \\
				& R-QDA           & 0.5496          & 0.5630          & 0.5719          & 0.5789          & 0.5835          & 0.5855          \\
				& Im-QDA          & 0.5293          & 0.5512          & 0.5703          & 0.5728          & 0.5725          & 0.5715          \\
				& \textbf{SR-QDA} & \textbf{0.5732} & \textbf{0.6101} & \textbf{0.5956} & \textbf{0.6159} & \textbf{0.6220} & \textbf{0.6219} \\ \midrule
				\multirow{4}{*}{$\sigma_1^2=1.5$} & QDA             & 0.5000          & 0.5000          & 0.4995          & 0.5325          & 0.5525          & 0.5739          \\
				& R-QDA           & 0.5609          & 0.5937          & 0.6138          & 0.6323          & 0.6519          & 0.6702          \\
				& Im-QDA          & 0.6978          & 0.6936          & 0.6950          & 0.6966          & 0.6975          & 0.6985          \\
				& \textbf{SR-QDA} & \textbf{0.7428} & \textbf{0.7539} & \textbf{0.7491} & \textbf{0.7554} & \textbf{0.7381} & \textbf{0.7551} \\ \midrule
				\multirow{4}{*}{$\sigma_1^2=2$}   & QDA             & 0.5000          & 0.5000          & 0.5001          & 0.5269          & 0.5445          & 0.5756          \\
				& R-QDA           & 0.6223          & 0.6903          & 0.7374          & 0.7754          & 0.8148          & 0.8454          \\
				& Im-QDA          & 0.8397          & 0.8421          & 0.8438          & 0.8455          & 0.8465          & 0.8468          \\
				& \textbf{SR-QDA} & \textbf{0.8585} & \textbf{0.8555} & \textbf{0.8521} & \textbf{0.8628} & \textbf{0.8750} & \textbf{0.8814} \\ \midrule
				\multirow{4}{*}{$\sigma_1^2=4$}   & QDA             & 0.5000          & 0.5000          & 0.4999          & 0.5066          & 0.5542          & 0.6817          \\
				& R-QDA           & 0.8480          & 0.9006          & 0.9166          & 0.9134          & 0.9333          & 0.9386          \\
				& Im-QDA          & 0.8911          & 0.8954          & 0.9116          & 0.9219          & 0.9320          & 0.9325          \\
				& \textbf{SR-QDA} & \textbf{0.9518} & \textbf{0.9636} & \textbf{0.9634} & \textbf{0.9640} & \textbf{0.9613} & \textbf{0.9619} \\ \bottomrule
			\end{tabular}%
		}
		\begin{tablenotes}
			\footnotesize
			\item[-] $p=150$, $a=0.5$, $\sigma_0^2=1$.
			\item[-] $n$ is the total training sample size of the two classes, and the proportion of $\fC_0$ is $\pi_0=0.5$.
		\end{tablenotes}
		
	\end{threeparttable}
\end{table}

\begin{figure}[ht]
	\centering
		\subfloat[\label{fig:a}$\sigma_1^2=1.2$]{\includegraphics[width=0.48\textwidth]{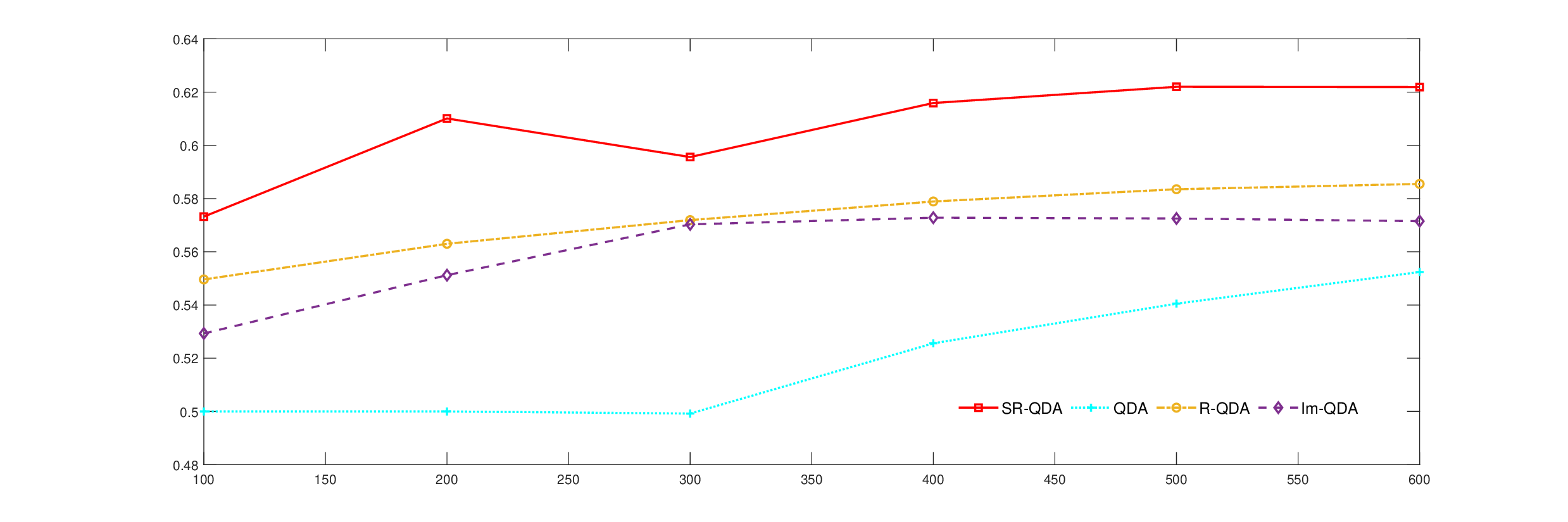}}
		\subfloat[\label{fig:b}$\sigma_1^2=1.5$]{\includegraphics[width=0.48\textwidth]{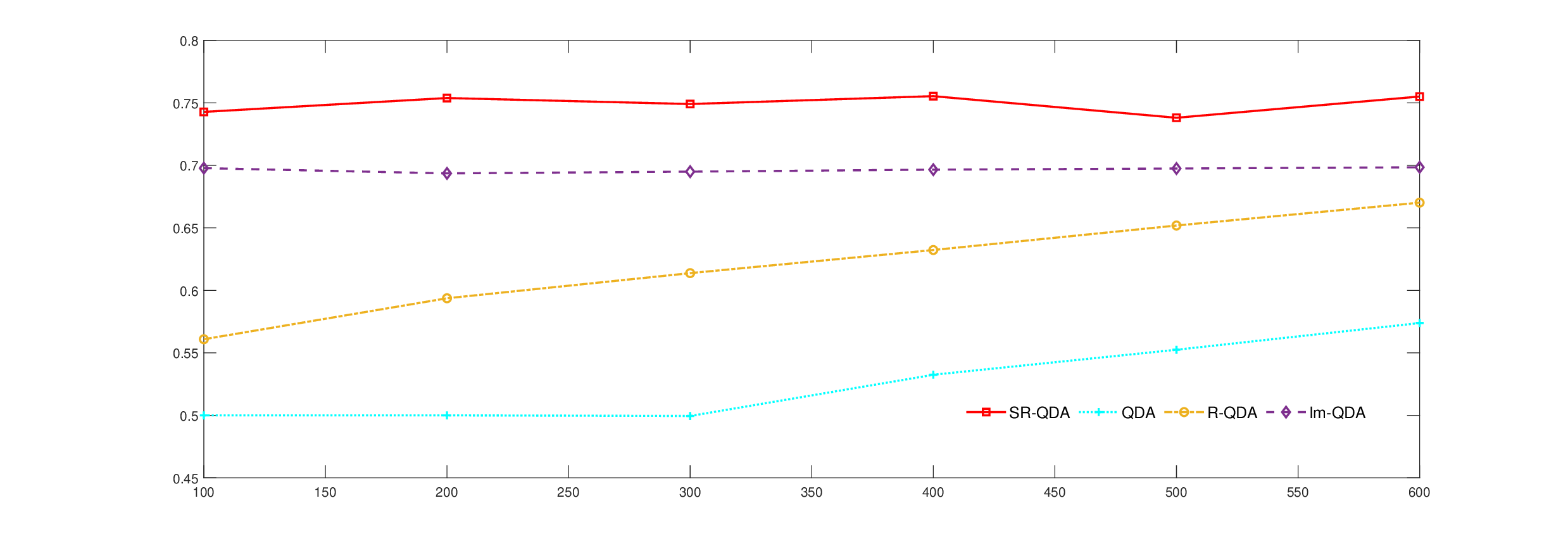}}\\
		\subfloat[\label{fig:c}$\sigma_1^2=2$]{\includegraphics[width=0.48\textwidth]{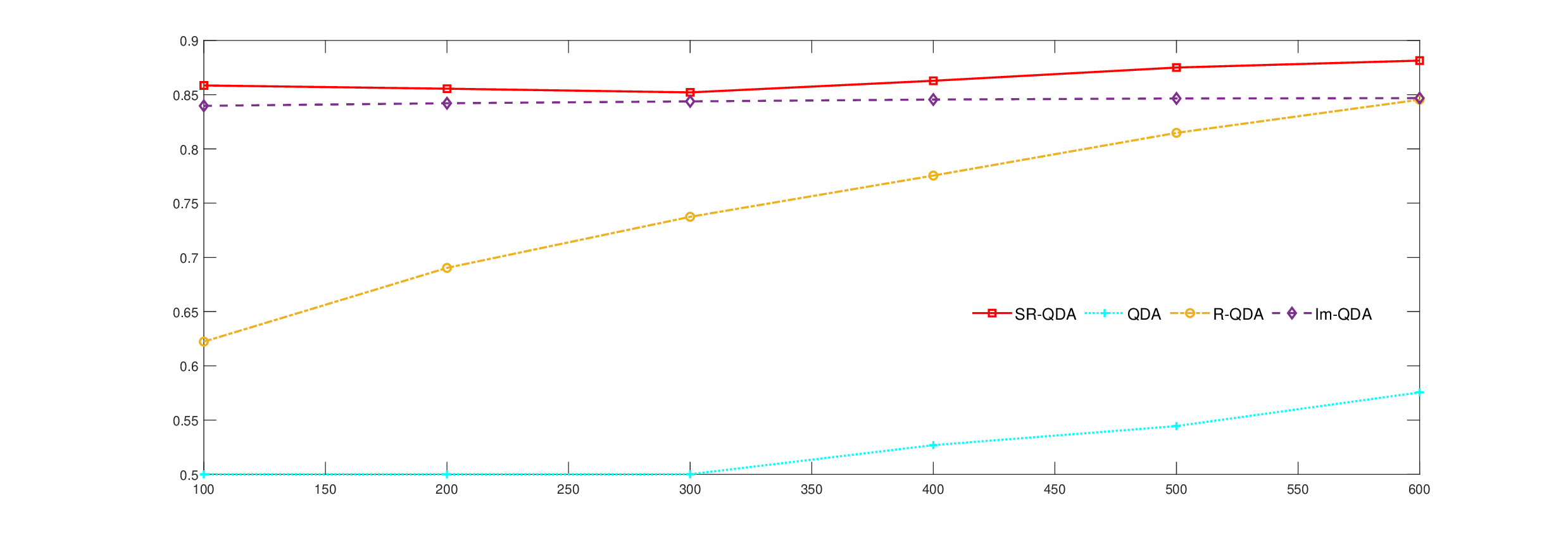}}
		\subfloat[\label{fig:d}$\sigma_1^2=4$]{\includegraphics[width=0.48\textwidth]{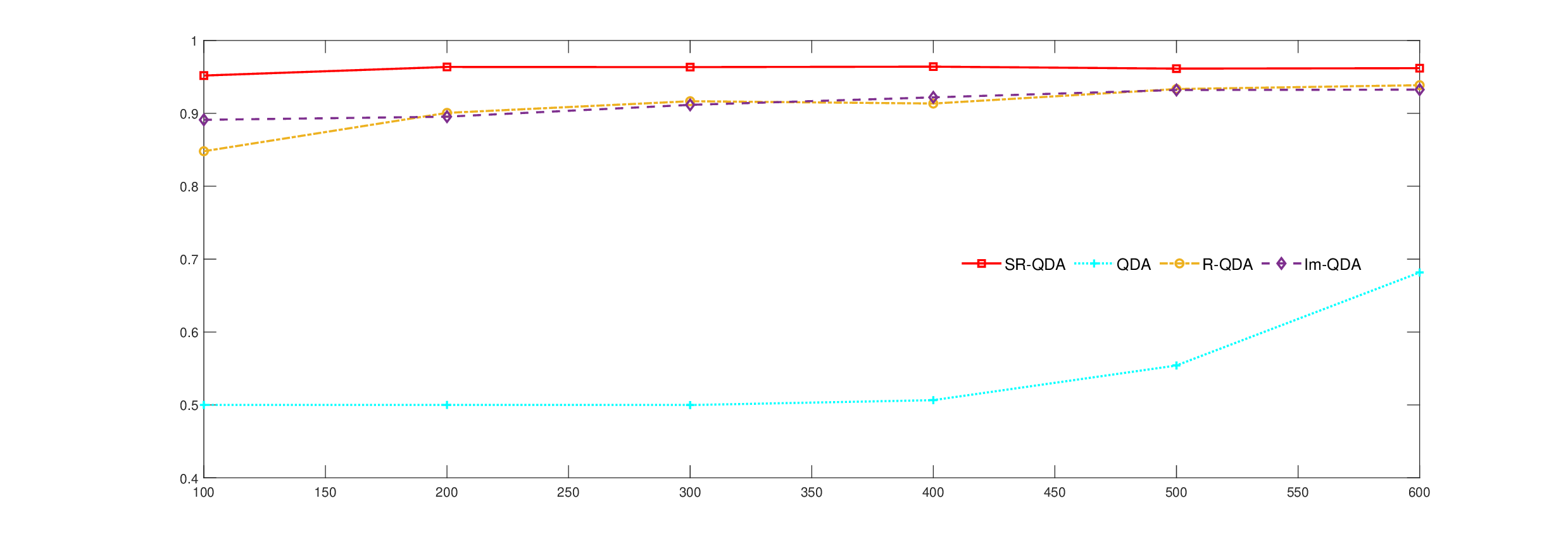}}
	\caption{Accuracy rate vs. training sample size $n$ for $p=150$ and $\pi_0=0.5$. Comparison for QDA, RLDA, ILDA and SRLDA with different values of $\sigma_1^2$.}	
\end{figure}

\subsection{Empirical analysis}

For empirical analysis, we use five real datasets, repeating each experiment 500 times, and compare the performance of all considered methods. For the first four real datasets, we rangdomly assigned $60\%$ of the observations to the training sample set and the remaining $40\%$ to the test sample set. For the fifth dataset, we investigate the performance of the proposed SR-QDA and other popular classifiers under different sample sizes. The five datasets are outlined below:

\begin{itemize}
	
	\item Dataset 1: Ultrasonic flowmeter diagnostics dataset. This dataset, provided by Gyamfi et al., is available at {\url{https://archive.ics.uci.edu/dataset/433/ultrasonic+flowmeter+diagnostics}.} It contains $87$ samples across two classes: "Healthy" and "Installation effects", with $p=36$ dimensions.

	\item Dataset 2: Ionosphere dataset. This dataset is collected by a system in Goose Bay, Labrador and is available at {\url{https://archive.ics.uci.edu/dataset/52/ionosphere}.} It includes $p=34$ attributes used to classfy radar into "good" and "bad" categories. There are $n_0=225$ samples for the "good" class and $n_1=126$ samples for the "bad" class.

	\item Dataset 3: Breast cancer Wisconsin dataset. This dataset is shared in UCI Machine Learning Repository and is specifically available at {\url{https://archive.ics.uci.edu/dataset/15/breast+cancer+wisconsin+original}.} There are $p=9$ features recorded for each people, excluding the class variable. The primary aim is to discriminate patients into two classes: "benign" and "malignant". It contains $n_0=458$ samples for the "benign" class and $n_1=241$ samples for the "malignant" class.

	\item Dataset 4: Handwritten digits dataset. This well-known dataset is available at {\url{https://yann.lecun.com/exdb/mnist/}}. It contains $70,000$ handwritten digits of 10 class labels, with $p=256$ dimensions. We use the highly confusing digits "4" and "9" to compare the performances of all menthods considered.

	\item Dataset 5: Epileptic seizure detection dataset. This dataset consists of recordings of brain activity using EEG signals and is publicly available at  {\url{https://www.kaggle.com/datasets/harunshimanto/epileptic-seizure-recognition/data?select=Epileptic+Seizure+Recognition.csv}}. It includes $5$ classes with $2300$ samples per class, each with $p=178$ dimensions. For our research, we focus on the most confusing classes for binary classification: class 4, which corresponds to recordings of patients with their eyes closed, and class 5, which corresponds to recordings of patients with their eyes open.
	
\end{itemize}

\begin{table}[]
	\centering
	\caption{Average accuracy rate for the binary classification of four datasets. Comparasion between the proposed method and other classifiers.}
	\vskip-0.3cm	
	\smallskip \small
	\label{tab:real_dataset1}
	\begin{threeparttable}
		\resizebox{0.9\textwidth}{!}{%
			\begin{tabular}{@{}cccccccccccc@{}}
				\toprule 
				\multirow{2}{*}{}                               & \multirow{2}{*}{QDA}    & \multirow{2}{*}{R-QDA}  & \multirow{2}{*}{Im-QDA} & \multirow{2}{*}{\textbf{SR-QDA}} & \multirow{2}{*}{SVM (gauss)} & \multirow{2}{*}{SVM (linear)} & \multirow{2}{*}{SVM (poly1)} & \multirow{2}{*}{SVM (rbf)} & \multirow{2}{*}{KNN1}   & \multirow{2}{*}{KNN3}   & \multirow{2}{*}{KNN5}   \\
				&                         &                         &                         &                                  &                              &                               &                              &                            &                         &                         &                         \\ \midrule
				\multicolumn{1}{c|}{\multirow{2}{*}{Dataset 1}} & \multirow{2}{*}{0.4283} & \multirow{2}{*}{0.6456} & \multirow{2}{*}{0.6678} & \multirow{2}{*}{\textbf{0.7529}} & \multirow{2}{*}{0.5917}      & \multirow{2}{*}{0.6000}       & \multirow{2}{*}{0.6000}      & \multirow{2}{*}{0.5908}    & \multirow{2}{*}{0.6853} & \multirow{2}{*}{0.6536} & \multirow{2}{*}{0.6793} \\
				\multicolumn{1}{c|}{}                           &                         &                         &                         &                                  &                              &                               &                              &                            &                         &                         &                         \\ \midrule
				\multicolumn{1}{c|}{\multirow{2}{*}{Dataset 2}} & \multirow{2}{*}{0.3617} & \multirow{2}{*}{0.8432} & \multirow{2}{*}{0.8223} & \multirow{2}{*}{\textbf{0.9283}} & \multirow{2}{*}{0.8425}      & \multirow{2}{*}{0.6383}       & \multirow{2}{*}{0.6383}      & \multirow{2}{*}{0.8425}    & \multirow{2}{*}{0.8328} & \multirow{2}{*}{0.8288} & \multirow{2}{*}{0.8322} \\
				\multicolumn{1}{c|}{}                           &                         &                         &                         &                                  &                              &                               &                              &                            &                         &                         &                         \\ \midrule
				\multicolumn{1}{c|}{\multirow{2}{*}{Dataset 3}} & \multirow{2}{*}{0.9513} & \multirow{2}{*}{0.9621} & \multirow{2}{*}{0.2272} & \multirow{2}{*}{\textbf{0.9666}} & \multirow{2}{*}{0.9682}      & \multirow{2}{*}{0.9657}       & \multirow{2}{*}{0.9597}      & \multirow{2}{*}{0.9682}    & \multirow{2}{*}{0.9550} & \multirow{2}{*}{0.9656} & \multirow{2}{*}{0.9675} \\
				\multicolumn{1}{c|}{}                           &                         &                         &                         &                                  &                              &                               &                              &                            &                         &                         &                         \\ \midrule
				\multicolumn{1}{c|}{\multirow{2}{*}{Dataset 4}} & \multirow{2}{*}{0.9699} & \multirow{2}{*}{0.9726} & \multirow{2}{*}{0.9225} & \multirow{2}{*}{\textbf{0.9782}} & \multirow{2}{*}{0.9738}      & \multirow{2}{*}{0.9728}       & \multirow{2}{*}{0.9574}      & \multirow{2}{*}{0.9738}    & \multirow{2}{*}{0.9807} & \multirow{2}{*}{0.9839} & \multirow{2}{*}{0.9817} \\
				\multicolumn{1}{c|}{}                           &                         &                         &                         &                                  &                              &                               &                              &                            &                         &                         &                         \\ \bottomrule
			\end{tabular}%
		}
		\begin{tablenotes}
			\footnotesize
			\item[-] QDA is the traditional QDA method; R-QDA is the regularized QDA method; \item[-] Im-QDA is the improved QDA method proposed by \cite{Sifaou2020HighDimensionalQD}; SR-QDA is the method we give;
			\item[-] SVM (gauss) and SVM (linear) are the support vector machines with gaussian kernel and linear kernel respectively; 
			\item[-] SVM (poly1) is the support vector machines with 1-order polynomial kernel;
			\item[-] SVM (rbf) is the support vector machines with radial basis function kernel;
			\item[-] KNN1, KNN3, and KNN5 are the k-nearest neighbor methods with 1, 3, and 5 neighbors, respectively. 
		\end{tablenotes}
	\end{threeparttable}
\end{table}
All experimental results based on the first four datesets are summarized and displayed in Table \ref{tab:real_dataset1}. In addition to our proposed SR-QDA method, we considered $10$ other classifiers in this experiment: QDA, R-QDA, Im-QDA proposed by \cite{Sifaou2020HighDimensionalQD}, SVM (gauss), SVM (linear), SVM (poly1), SVM (rbf), KNN1, KNN3 and KNN5. The results indicate that the QDA method fails for dataset $1$ and $2$, while other methods R-QDA, Im-QDA, SVMs and KNNs perform better. However, their classification accuracy is generally lower than that of our SR-QDA method. Therefore, SR-QDA outperforms the other methods in handling high-dimensional data with small sample sizes. On the other hand, for dataset $3$, which has low dimension and large sample, the SR-QDA classifier performs comparably to other widely recognized classification methods. Similar results were observed for dataset $4$ after dimensionality reduction using PCA.

In the final investigation, we use Dataset $5$: Epileptic seizure detection to study the performance of the proposed SR-QDA classifer in comparion with SVM and KNN deeply. For SVM, both linear and polynomial kernels are applied, while for KNN, the number of neighbors considered is $1$ and $3$. The accurary rates of these five methods are presented in Figure $\ref{fig:Realdata_Res2}$ versus the training sample size. The results demonstrate that SR-QDA outperforms all classifiers we considered, with other methods requiring more samples to achieve improved classfication accuracy. Even with a sample size as large as $2,000$, it is still insufficient, as the accuracy of KNN1, which performs best among these methods, remains lower than that of SR-QDA. 

\begin{figure}[H]
	\setlength{\abovecaptionskip}{0pt}
	\setlength{\belowcaptionskip}{0pt}
	\centering
	\includegraphics[height=5.51cm,width=10.56cm]{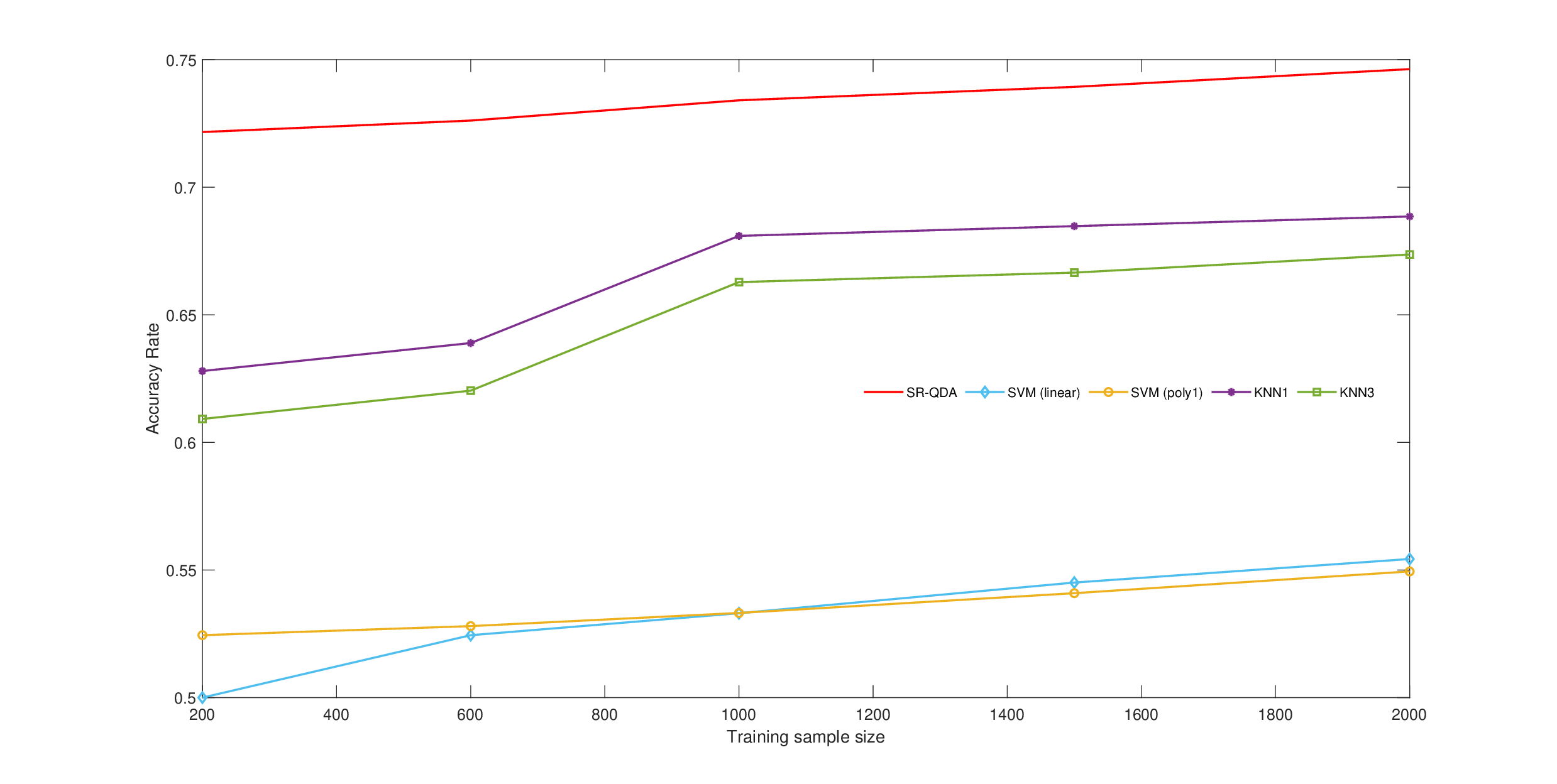}
	\caption{Accuracy rate vs. training sample size for $p=40$. Comparison between the proposed SR-QDA classifier, SVM and KNN using Epileptic Seizure Detection Database of class $4$ and $5$.}
	\label{fig:Realdata_Res2}
\end{figure}

%
%

{\appendix[]

Before presenting all proofs for theorems, we show the following lemma:

\begin{lemma}\label{lem:angle}
	Under Assumptions \ref{as:1} to \ref{as:4}, for any deterministic unit vector $\bom\in S_{\mathbb{R}}^{p-1}$, we have
	\beq\label{eq:angle}
	\left|\bom^T\fu_{j,i}\right|&\xrightarrow{a.s.}&\sqrt{\frac{\lambda_{j,i}^2-c_i}{\lambda_{j,i}\(\lambda_{j,i}+c_i\)}}\cdot\left|\bom^T\fv_{j,i}\right|
	\eeq
	where $j\in{\mathbb{I}_i}, i=0, 1$.
\end{lemma}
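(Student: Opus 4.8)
The plan is to split the unit sample eigenvector $\fu_{j,i}$ into its component inside the population spike subspace and a ``bulk'' remainder, and to show that only the projection onto $\fv_{j,i}$ survives in the limit. Fix $i\in\{0,1\}$ and $j\in\mathbb{I}_i$ and write
$$\fu_{j,i}=\sum_{k\in\mathbb{I}_i}\big(\fv_{k,i}^T\fu_{j,i}\big)\fv_{k,i}+\mathbf{w}_{j,i},$$
where $\mathbf{w}_{j,i}$ is orthogonal to every $\fv_{k,i}$, $k\in\mathbb{I}_i$. Taking the inner product with $\bom$ gives
$$\bom^T\fu_{j,i}=\big(\fv_{j,i}^T\fu_{j,i}\big)\big(\bom^T\fv_{j,i}\big)+\sum_{\substack{k\in\mathbb{I}_i\\ k\ne j}}\big(\fv_{k,i}^T\fu_{j,i}\big)\big(\bom^T\fv_{k,i}\big)+\bom^T\mathbf{w}_{j,i},$$
so it suffices to identify the limit of $\fv_{j,i}^T\fu_{j,i}$ and to show that the last two terms vanish almost surely.

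For the first term I would invoke the classical ``cosine law'' for spiked sample covariance matrices under Assumptions \ref{as:1}--\ref{as:4}: these are the eigenvector counterparts of the spiked-eigenvalue results and are covered by the references cited in the proof of Theorem \ref{thm:6}. Since $\lambda_{j,i}$ is fixed and, by Assumption \ref{as:3}, bounded away from $\pm\sqrt{J}$, one has almost surely
$$\big|\fv_{j,i}^T\fu_{j,i}\big|^2\longrightarrow\frac{\lambda_{j,i}^2-c_i}{\lambda_{j,i}(\lambda_{j,i}+c_i)}=a_{j,i},\qquad \fv_{k,i}^T\fu_{j,i}\longrightarrow 0\ \ (k\in\mathbb{I}_i,\ k\ne j).$$
Because $\mathbb{I}_i$ is finite and $|\bom^T\fv_{k,i}|\le 1$, the middle sum in the identity above then converges to $0$ a.s.

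The real work is the delocalization estimate $\bom^T\mathbf{w}_{j,i}\xrightarrow{a.s.}0$. Up to a negligible error, $\mathbf{w}_{j,i}$ is the projection of $\fu_{j,i}$ onto the orthogonal complement of the $r_i$-dimensional population spike subspace; on that complement the Gaussian sample is rotationally invariant, so $\mathbf{w}_{j,i}/\|\mathbf{w}_{j,i}\|$ is asymptotically Haar distributed on a sphere of dimension $p-r_i$. Hence $\bom^T(\mathbf{w}_{j,i}/\|\mathbf{w}_{j,i}\|)$ has the law of a single coordinate of a uniform unit vector in dimension $\sim p$, which is $O_P(p^{-1/2})$; the almost-sure statement follows from the concentration inequality on the sphere together with Borel--Cantelli. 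Equivalently, the same conclusion, together with a self-contained derivation of the cosine law, can be obtained from the anisotropic deterministic-equivalent estimate for the bilinear form $\bom^T(\hat{\bSig}_i-z)^{-1}\bom$ via the contour representation $\fu_{j,i}\fu_{j,i}^T=-\tfrac{1}{2\pi i}\oint_{\Gamma_j}(\hat{\bSig}_i-z)^{-1}\,dz$ over a small loop $\Gamma_j$ enclosing only $l_{j,i}$. Combining the three parts gives $\bom^T\fu_{j,i}-\big(\fv_{j,i}^T\fu_{j,i}\big)\big(\bom^T\fv_{j,i}\big)\to 0$ a.s., hence $\big|\bom^T\fu_{j,i}\big|-\big|\fv_{j,i}^T\fu_{j,i}\big|\,\big|\bom^T\fv_{j,i}\big|\to 0$ a.s., and feeding in $|\fv_{j,i}^T\fu_{j,i}|\to\sqrt{a_{j,i}}$ yields \eqref{eq:angle}.

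I expect the delocalization step to be the only genuine obstacle: controlling the inner product of the fixed deterministic direction $\bom$ with the random bulk part of the eigenvector, and upgrading convergence in probability to almost-sure convergence. A minor point to handle carefully is that the cosine law must be applied to each of the finitely many spikes---positive and negative---simultaneously, and that the strict separations in Assumption \ref{as:3} are precisely what guarantee $a_{j,i}\in(0,1)$, distinct (hence well-defined up to sign) spike eigenvectors, and the one-to-one correspondence between $l_{j,i}$ and $\lambda_{j,i}$.
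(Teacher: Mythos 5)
Your proof is correct, but it is worth noting that the paper does not actually prove this lemma: the ``proof'' is a one-line remark deferring to \cite{2022Spectrally}, and the same facts are later quoted without proof as \eqref{limits} (Theorem 9.1 of Couillet and Debbah \cite{2011Random}) in the proof of Theorem \ref{th2}. So your reconstruction is genuinely more than what the paper supplies, and it is the standard route. All three ingredients are sound: the decomposition of $\fu_{j,i}$ along the population spike directions plus an orthogonal remainder $\mathbf{w}_{j,i}$, the cosine law $|\fv_{j,i}^T\fu_{j,i}|^2\to a_{j,i}$ together with $\fv_{k,i}^T\fu_{j,i}\to 0$ for $k\neq j$ (finitely many terms, so the cross sum vanishes), and the delocalization of the bulk component. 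For the last step your rotational-invariance argument can be made exact rather than merely asymptotic: the sample covariance matrix depends only on the centered observations $\bSig_i^{1/2}(\fz_\ell-\bar{\fz})$, whose law is invariant under every orthogonal map fixing $\mathrm{span}\{\fv_{k,i}\}_{k\in\mathbb{I}_i}$ pointwise, so conditionally on its spike-subspace component the direction $\mathbf{w}_{j,i}/\|\mathbf{w}_{j,i}\|$ is \emph{exactly} Haar on the unit sphere of the $(p-r_i)$-dimensional orthocomplement; L\'evy concentration on the sphere plus Borel--Cantelli then gives $\bom^T\mathbf{w}_{j,i}\to 0$ a.s., with no need for the heavier anisotropic local-law machinery you mention as an alternative. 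The one point to flag is that the phase-transition condition for the class-$i$ sample covariance involves the ratio $c_i=p/n_i\to J_i$, not $J=\lim p/n$: since $J_i>J$, the paper's Assumption \ref{as:3} (separation from $\pm\sqrt{J}$) does not by itself guarantee $\lambda_{j,i}^2>J_i$, which is what the cosine law and the positivity of $a_{j,i}$ actually require; this is an inconsistency in the paper's assumptions rather than a gap in your argument, but your proof should state the hypothesis in terms of $\sqrt{J_i}$.
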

\begin{remark}
	{\color{blue}{This lemma is  well proofed in \cite{2022Spectrally}, which is omitted here.}}
\end{remark}

\begin{proof}[Proof of Theorem \ref{th1}]\label{proof_th1}
	Replacing $\widetilde{\fH}_i^{-1}$ by their expression shown in \eqref{H}, we can easily get
	$$Y_i(\widetilde{\fH}_0^{-1},\widetilde{\fH}_1^{-1})=\left(\frac{1}{\sigma_1^2}-\frac{1}{\sigma_0^2}\right)\fz^T\bSig_i\fz+\kappa_i+2\fy_i^T\fz-\xi_i,$$
	where $$\kappa_{i}= \sum_{j\in{\mathbb{I}_{2,0}}}\frac{\gamma^{(2)}_{j, 0}}{\sigma_{0}^{2}} \left(\fz^{T}\bSig_i^{\frac{1}{2}} \fu_{j, 0}\right)^{2}+\sum_{j\in{\mathbb{I}_{1,0}}}\frac{\gamma^{(1)}_{j,0}}{\sigma_0^2}\left(\fz^T\bSig_i^{\frac{1}{2}}\fu_{j,0}\right)^2-\sum_{j\in{\mathbb{I}_{2,1}}}\frac{\gamma^{(2)}_{j,1}}{\sigma_1^2}\left(\fz^T\bSig_i^{\frac{1}{2}}\fu_{j,1}\right)^2- \sum_{j\in{\mathbb{I}_{1,1}}} \frac{\gamma^{(1)}_{j, 1}}{\sigma_{1}^{2}}\left(\fz^{T}\bSig_i^{\frac{1}{2}} \fu_{j, 1}\right)^{2},$$ According to \cite{bai98}, we have
	\begin{equation}\label{baiQ}
	\frac{1}{p}z^T\bSig_iz-\frac{1}{p}tr(\bSig_i)\stackrel{a.s.}{\longrightarrow}0.
	\end{equation} Besides, the spiked model implies that $\frac{1}{p}tr(\bSig_i)\longrightarrow{\sigma_i^2}.$ Thus, $$\left(\frac{1}{\sigma_1^2}-\frac{1}{\sigma_0^2}\right)\fz^T\bSig_i\fz-p\sigma_i^2\left(\frac{1}{\sigma_1^2}-\frac{1}{\sigma_0^2}\right)\stackrel{a.s.}{\longrightarrow}0.$$
	Furthermore, we obtain that
	$$Y_i(\widetilde{\fH}_0^{-1},\widetilde{\fH}_1^{-1})-\left(p\sigma_i^2\left(\frac{1}{\sigma_1^2}-\frac{1}{\sigma_0^2}\right)+\kappa_i+2\fy_i^T\fz-\xi_i\right)\stackrel{a.s.}{\longrightarrow}0,$$
	which is the conclusion of Theorem \ref{th1}.
\end{proof}

\begin{proof}[Proof of Theorem \ref{th2}]\label{Proof_th2}
	First, we recall the following results showing in \cite{2011Random} that will used throughout the proof:
	\begin{equation}\label{limits}
	\begin{array}{r}
	\fu_{j, i}^{T} \fv_{k, i}\fv_{k, i}^{T} \fu_{j, i}-a_{j, i} \delta_{j, k} \stackrel{a . s .}{\longrightarrow} 0, \\
	\fu_{j, i}^{T} \fv_{k, \tilde{i}}\fv_{k, \tilde{i}}^{T} \fu_{k, i}-a_{j, i}\left(\fv_{j, i}^{T} \fv_{k, \tilde{i}}\right)^{2} \stackrel{a . s .}{\longrightarrow} 0, \\
	\frac{1}{\|\bmu\|^{2}} \bmu^{T} \fu_{j, i} \fu_{j, i}^{T} \bmu-a_{j, i} b_{j, i} \stackrel{a . s .}{\longrightarrow} 0,
	\end{array}
	\end{equation}
	where $\delta_{j,k}$ is Kronecker delta. We shall also recall the following formula allowing to compute the variance and covariance of quadratic forms of a multivariate normal distribution. If $\fz \sim \fN_p\left(\fzero, \fI\right)$ and $\fQ$ is a deterministic $p\times{p}$ matrix, then 
	\begin{equation}\label{eq:var}
	\var\left(\fz^{T} \fQ \fz\right)=2 \tr \fQ^{2}
	\end{equation}
	Let $\fQ_1$ and $\fQ_2$ be two deterministic $p\times{p}$ matrices, we similarly have:
	\begin{equation}\label{eq:cov}
	\cov\left(\fz^T\fQ_1\fz, \fz^T\fQ_2\fz\right)=2\tr \fQ_1\fQ_2.
	\end{equation}
	The expectation of $\widetilde{Y}_i$ is given by 
	$$m_i(\gamma)=p\sigma_i^2\left(\frac{1}{\sigma_1^2}-\frac{1}{\sigma_0^2}\right)+\tilde{\kappa}_i-\xi_i,$$ where
	$$\tilde{\kappa}_{i}=\mathbb{E}\kappa_i= \sum_{j\in{\mathbb{I}_{2,0}}}\frac{\gamma^{(2)}_{j, 0}}{\sigma_{0}^{2}} \fu_{j,0}^{T}\bSig_i \fu_{j, 0}+\sum_{j\in{\mathbb{I}_{1,0}}}\frac{\gamma^{(1)}_{j,0}}{\sigma_0^2}\fu_{j,0}^T\bSig_i\fu_{j,0}-\sum_{j\in{\mathbb{I}_{2,1}}}\frac{\gamma^{(2)}_{j,1}}{\sigma_1^2}\fu_{j,1}^T\bSig_i\fu_{j,1}- \sum_{j\in{\mathbb{I}_{1,1}}}\frac{\gamma^{(1)}_{j, 1}}{\sigma_{1}^{2}} \fu_{j,1}^{T} \bSig_i\fu_{j, 1}.$$
	
	According to the structure of $\bSig_i$ shown in \eqref{spikedmodel}, we have
	\begin{eqnarray}
	&\fu_{j,1}^T\bSig_i\fu_{j,1}
	=\sigma_i^2+\sum\limits_{\ell\in{\mathbb{I}_i}}\lambda_{\ell,i}\fu_{j,1}^T\fv_{\ell,i}\fv_{\ell,i}^T\fu_{j,1}, \\
	&\fu_{j,0}^T\bSig_i\fu_{j,0}
	=\sigma_i^2+\sum\limits_{\ell\in{\mathbb{I}_i}}\lambda_{\ell,i}\fu_{j,0}^T\fv_{\ell,i}\fv_{\ell,i}^T\fu_{j,0}.
	\end{eqnarray}
	Based on Lemma \ref{lem:angle}, we have
	\begin{equation}
	\fu_{j,0}^T\bSig_i\fu_{j,0}\stackrel{a.s.}{\longrightarrow}\left\{
	\begin{aligned}
	&\sigma_0^2\left[1+\lambda_{j,0}a_{j,0}\right], &\quad i=0;\\
	&\sigma_1^2\phi_{j,0}, &\quad i=1.
	\end{aligned}
	\right.
	\end{equation}
	\begin{equation}
	\fu_{j,1}^T\bSig_i\fu_{j,1}\stackrel{a.s.}{\longrightarrow}\left\{
	\begin{aligned}
	&	\sigma_0^2\phi_{j,1}, &\quad i=0;\\
	&	\sigma_1^2\left[1+\lambda_{j,1}a_{j,1}\right], &\quad i=1.
	\end{aligned}
	\right.
	\end{equation}
	
	Furthermore, it's easy to obtain
	\begin{eqnarray*}
		&\tilde{\kappa}_0-\left[\sum\limits_{j\in{\mathbb{I}_{2,0}}}\gamma^{(2)}_{j,0}(1+\lambda_{j,0}a_{j,0})+\sum\limits_{j\in{\mathbb{I}_{1,0}}}\gamma^{(1)}_{j,0}(1+\lambda_{j,0}a_{j,0})- \frac{\sigma_0^2}{\sigma_1^2}\sum\limits_{j\in{\mathbb{I}_{2,1}}}\gamma_{j,1}^{(2)}\phi_{j,1}-\frac{\sigma_0^2}{\sigma_1^2}\sum\limits_{j\in{\mathbb{I}_{1,1}}}\gamma_{j,1}^{(1)}\phi_{j,1}\right]\stackrel{a . s . }{\longrightarrow}0,
	\end{eqnarray*}
	\begin{eqnarray*}
		&\tilde{\kappa}_1-\left[\frac{\sigma_1^2}{\sigma_0^2} \sum\limits_{j\in{\mathbb{I}_{2,0}}}\gamma_{j,0}^{(2)}\phi_{j,0}+\frac{\sigma_1^2}{\sigma_0^2}\sum\limits_{j\in{\mathbb{I}_{1,0}}}\gamma_{j,0}^{(1)}\phi_{j,0}-\sum\limits_{j\in{\mathbb{I}_{2,1}}}\gamma_{j,1}^{(2)}(1+\lambda_{j,1}a_{j,1})-\sum\limits_{j\in{\mathbb{I}_{1,1}}}\gamma_{j,1}^{(1)}(1+\lambda_{j,1}a_{j,1})\right]
		\stackrel{a.s.}{\longrightarrow}0,
	\end{eqnarray*}
	As for $\xi_i=-2\eta^{SR-QDA}+(\bmu_i-\bar{\fx}_0)^T\widetilde{\fH}_0^{-1}(\bmu_i-\bar{\fx}_0)-(\bmu_i-\bar{\fx}_1)^T\widetilde{\fH}_1^{-1}(\bmu_i-\bar{\fx}_1)$, noting that $\bar{\fx}_i=\bmu_i+\frac{1}{n_i}\bSig_i^{\frac{1}{2}}\fZ_i\mathbf{1}_{n_i}$ and $\fZ_i\in{\mathbb{R}^{p\times{n_i}}}$ with entries i.i.d $\fN(0,1)$, we can rewrite
	\begin{eqnarray}\label{xi} 
	&\xi_0=-2\eta+\frac{1}{n_0^2}\mathbf{1}_{n_0}^T\fZ_0^T\bSig_0^{\frac{1}{2}}\widetilde{\fH}_0^{-1}\bSig_0^{\frac{1}{2}}\fZ_0\mathbf{1}_{n_0}-(\bmu+\frac{1}{n_1}\bSig_1^{\frac{1}{2}}\fZ_1\mathbf{1}_{n_1})^T\widetilde{\fH}_1^{-1}(\bmu+\frac{1}{n_1}\bSig_1^{\frac{1}{2}}\fZ_1\mathbf{1}_{n_1}),\\
	&\xi_1=-2\eta+(\bmu-\frac{1}{n_0}\bSig_0^{\frac{1}{2}}\fZ_0\mathbf{1}_{n_0})^T\widetilde{\fH}_0^{-1}(\bmu-\frac{1}{n_0}\bSig_0^{\frac{1}{2}}\fZ_0\mathbf{1}_{n_0})-\frac{1}{n_1^2}\mathbf{1}_{n_1}^T\fZ_1^T\bSig_1^{\frac{1}{2}}\widetilde{\fH}_1^{-1}\bSig_1^{\frac{1}{2}}\fZ_1\mathbf{1}_{n_1}.
	\end{eqnarray}
	According to Theorem 3.3.2 in \cite{0An}, the sample covariance matrix $\fS_i$ is independent of $\bar{\fx}_i$, which also means that $\bar{\fx}_i$ is independent of the eigenvectors of $\fS_i$ that appears in $\widetilde{\fH}_i^{-1}$. Therefore, we have
	\begin{equation}\label{xi:1}
	\frac{1}{n_i}\bmu^T\widetilde{\fH}_i^{-1}\bSig_i^{\frac{1}{2}}\fZ_i\mathbf{1}_{n_i}\stackrel{a.s.}{\longrightarrow}0,
	\end{equation}
	\begin{equation}
	\frac{1}{n_{i}^{2}} \mathbf{1}_{n_{i}}^{T} \fZ_{i}^{T}\bSig_i \widetilde{\fH}_{i}^{-1} \bSig_{i}\fZ_i \mathbf{1}_{n_{i}}-\frac{1}{n_{i}} \tr \bSig_{i} \widetilde{\fH}_{i}^{-1} \stackrel{a . s .}{\longrightarrow} 0.
	\end{equation}
	Combining the expressions of $\bSig_i$ and $\widetilde{\fH}_i$ and considering the fact that $r_i$ is finite, it's easy to have $\frac{1}{n_{i}} \tr \bSig_{i} \widetilde{\fH}_{i}^{-1}-c_i\stackrel{a . s .}{\longrightarrow} 0.$ Thus,
	\begin{equation}\label{xi:2}
	\frac{1}{n_{i}^{2}} \mathbf{1}_{n_{i}}^{T} \fZ_{i}^{T}\bSig_i \widetilde{\fH}_{i}^{-1} \bSig_{i}\fZ_i \mathbf{1}_{n_{i}}-c_i\stackrel{a . s .}{\longrightarrow} 0.
	\end{equation}
	Besides, replacing $\widetilde{\fH}_i^{-1}$ by its expression and applying \eqref{limits}, we can obtain 
	\begin{equation}\label{xi:3}
	\bmu^T\widetilde{\fH}_i^{-1}\bmu-\alpha_i\left(1-\sum\limits_{j\in{\mathbb{I}_{2,i}}}\gamma_{j,i}^{(2)}a_{j,i}b_{j,i}-\sum\limits_{j\in{\mathbb{I}_{1,i}}}\gamma_{j,i}^{(1)}a_{j,i}b_{j,i}\right)\stackrel{a . s .}{\longrightarrow} 0.
	\end{equation}
	Combining \eqref{xi}, \eqref{xi:1}, \eqref{xi:2} and \eqref{xi:3}, we get
	\begin{eqnarray}
	&\xi_{0}-\left[-2 \eta+c_{0}-c_{1}-\alpha_1\left(1-\sum\limits_{j\in{\mathbb{I}_{2,1}}}\gamma_{j,1}^{(2)}a_{j,1}b_{j,1}-\sum\limits_{j\in{\mathbb{I}_{1,1}}}\gamma_{j,1}^{(1)}a_{j,1}b_{j,1}\right)\right]\stackrel{a . s .}{\longrightarrow} 0.\\
	&\xi_{1}-\left[-2 \eta+c_{0}-c_{1}+\alpha_0\left(1-\sum\limits_{j\in{\mathbb{I}_{2,0}}}\gamma_{j,0}^{(2)}a_{j,0}b_{j,0}-\sum\limits_{j\in{\mathbb{I}_{1,0}}}\gamma_{j,0}^{(1)}a_{j,0}b_{j,0}\right)\right]\stackrel{a . s .}{\longrightarrow} 0.
	\end{eqnarray}
	So far we obtain the first convergengce result of Theorem \ref{th2}. Now, we address the convergence of $\vartheta_i(\gamma)$. Since $\fz$ is Gaussian, it's not hard to see that $\kappa_i$ and $\fy_i^T\fz$ are uncorrelated. Thus, we have
	\begin{equation}\label{var:total}
	\vartheta_i(\gamma)=\var(\kappa_i)+4\var(\fy_i^T\fz),
	\end{equation}
	Let's begin with $\var(\kappa_i)$ which can be written as
	\begin{equation}
	\begin{aligned}
	\var(\kappa_i)&=\frac{1}{\sigma_0^4}\sum\limits_{j\in{\mathbb{I}_{2,0}}}\left(\gamma_{j,0}^{(2)}\right)^2\var\left(\left[\fu_{j,0}^T\bSig_i^{\frac{1}{2}}\fz\right]^2\right)+\frac{2}{\sigma_0^4}\sum\limits_{j\neq\ell\in{\mathbb{I}_{2,0}}}\gamma_{j,0}^{(2)}\gamma_{\ell,0}^{(2)}\cov\left(\left[\fu_{j,0}^T\bSig_i^{\frac{1}{2}}\fz\right]^2, \left[\fu_{\ell,0}^T\bSig_i^{\frac{1}{2}}\fz\right]^2\right)\\
	&+\frac{1}{\sigma_0^4}\sum\limits_{j\in{\mathbb{I}_{1,0}}}\left(\gamma_{j,0}^{(1)}\right)^2\var\left(\left[\fu_{j,0}^T\bSig_i^{\frac{1}{2}}\fz\right]^2\right)+\frac{2}{\sigma_0^4}\sum\limits_{j\neq\ell\in{\mathbb{I}_{1,0}}}\gamma_{j,0}^{(1)}\gamma_{\ell,0}^{(1)}\cov\left(\left[\fu_{j,0}^T\bSig_i^{\frac{1}{2}}\fz\right]^2, \left[\fu_{\ell,0}^T\bSig_i^{\frac{1}{2}}\fz\right]^2\right)\\
	&+\frac{1}{\sigma_1^4}\sum\limits_{j\in{\mathbb{I}_{2,1}}}\left(\gamma_{j,1}^{(2)}\right)^2\var\left( \left[\fu_{j,1}^T\bSig_i^{\frac{1}{2}}\fz\right]^2\right)+\frac{2}{\sigma_1^4}\sum\limits_{j\neq\ell\in{\mathbb{I}_{2,1}}}\gamma_{j,1}^{(2)}\gamma_{\ell,1}^{(2)}\cov\left(\left[\fu_{j,1}^T\bSig_i^{\frac{1}{2}}\fz\right]^2, \left[\fu_{\ell,1}^T\bSig_i^{\frac{1}{2}}\fz\right]^2\right)\\
	&+\frac{1}{\sigma_1^4}\sum\limits_{j\in{\mathbb{I}_{1,1}}}\left(\gamma_{j,1}^{(1)}\right)^2\var\left( \left[\fu_{j,1}^T\bSig_i^{\frac{1}{2}}\fz\right]^2\right)+\frac{2}{\sigma_1^4}\sum\limits_{j\neq\ell\in{\mathbb{I}_{1,1}}}\gamma_{j,1}^{(1)}\gamma_{\ell,1}^{(1)}\cov\left(\left[\fu_{j,1}^T\bSig_i^{\frac{1}{2}}\fz\right]^2, \left[\fu_{\ell,1}^T\bSig_i^{\frac{1}{2}}\fz\right]^2\right)\\
	&+\sum\limits_{j\in{\mathbb{I}_{2,0}}}\sum\limits_{\ell\in{\mathbb{I}_{1,0}}}\frac{2\gamma_{j,0}^{(2)}\gamma_{\ell,0}^{(1)}}{\sigma_0^4}\cov\left(\left[\fu_{j,0}^T\bSig_i^{\frac{1}{2}}\fz\right]^2, \left[\fu_{\ell,0}^T\bSig_i^{\frac{1}{2}}\fz\right]^2\right)\\
	&-\sum\limits_{j\in{\mathbb{I}_{2,0}}}\sum\limits_{\ell\in{\mathbb{I}_{2,1}}}\frac{2\gamma_{j,0}^{(2)}\gamma_{\ell,1}^{(2)}}{\sigma_1^2\sigma_0^2}\cov\left(\left[\fu_{j,0}^T\bSig_i^{\frac{1}{2}}\fz\right]^2, \left[\fu_{\ell,1}^T\bSig_i^{\frac{1}{2}}\fz\right]^2\right)\\
	&-\sum\limits_{j\in{\mathbb{I}_{2,0}}}\sum\limits_{\ell\in{\mathbb{I}_{1,1}}}\frac{2\gamma_{j,0}^{(2)}\gamma_{\ell,1}^{(1)}}{\sigma_1^2\sigma_0^2}\cov\left(\left[\fu_{j,0}^T\bSig_i^{\frac{1}{2}}\fz\right]^2, \left[\fu_{\ell,1}^T\bSig_i^{\frac{1}{2}}\fz\right]^2\right)\\
	&-\sum\limits_{j\in{\mathbb{I}_{1,0}}}\sum\limits_{\ell\in{\mathbb{I}_{2,1}}}\frac{2\gamma_{j,0}^{(1)}\gamma_{\ell,1}^{(2)}}{\sigma_1^2\sigma_0^2}\cov\left(\left[\fu_{j,0}^T\bSig_i^{\frac{1}{2}}\fz\right]^2, \left[\fu_{\ell,1}^T\bSig_i^{\frac{1}{2}}\fz\right]^2\right)\\
	&-\sum\limits_{j\in{\mathbb{I}_{1,0}}}\sum\limits_{\ell\in{\mathbb{I}_{1,1}}}\frac{2\gamma_{j,0}^{(1)}\gamma_{\ell,1}^{(1)}}{\sigma_1^2\sigma_0^2}\cov\left(\left[\fu_{j,0}^T\bSig_i^{\frac{1}{2}}\fz\right]^2, \left[\fu_{\ell,1}^T\bSig_i^{\frac{1}{2}}\fz\right]^2\right)\\
	&+\sum\limits_{j\in{\mathbb{I}_{2,1}}}\sum\limits_{\ell\in{\mathbb{I}_{1,1}}}\frac{2\gamma_{j,1}^{(2)}\gamma_{\ell,1}^{(2)}}{\sigma_1^4}\cov\left(\left[\fu_{j,1}^T\bSig_i^{\frac{1}{2}}\fz\right]^2, \left[\fu_{\ell,1}^T\bSig_i^{\frac{1}{2}}\fz\right]^2\right).
	\end{aligned}
	\end{equation}
	Among them, based on \eqref{eq:var}, replacing $\bSig_i$ by its expression and applying \eqref{limits}, it's easy to get 
	\begin{equation}\label{varv0:1}
	\var\left(\left[\fu_{j,0}^T\bSig_i^{\frac{1}{2}}\fz\right]^2\right)\stackrel{a . s .}{\longrightarrow} \left\{
	\begin{aligned}
	&2\sigma_0^4(1+\lambda_{j,0}a_{j,0})^2, &\quad i=0;\\
	&2\sigma_1^4\phi_{j,0}^2, & \quad i=1.
	\end{aligned}\right. 
	\end{equation}
	and 
	\begin{equation}\label{varv0:2}
	\var\left(\left[\fu_{j,1}^T\bSig_i^{\frac{1}{2}}\fz\right]^2\right)\stackrel{a . s .}{\longrightarrow} \left\{
	\begin{aligned}
	&2\sigma_0^4\phi_{j,1}^2, &\quad i=0;\\
	&2\sigma_1^4(1+\lambda_{j,1}a_{j,1})^2, & \quad i=1.
	\end{aligned}\right.
	\end{equation}
	
	Besides, for $j\neq\ell$, according to \eqref{limits}, we get
	\begin{equation}\label{covlim1}
	\cov\left(\left[\fu_{j,0}^T\bSig_i^{\frac{1}{2}}\fz\right]^2, \left[\fu_{\ell,0}^T\bSig_i^{\frac{1}{2}}\fz\right]^2\right)=
	2\left(\fu_{j,0}^T\bSig_i\fu_{\ell,0}\right)^2\stackrel{a.s.}{\longrightarrow}\left\{
	\begin{aligned}
	&\quad0, &i=0;\\
	&2\sigma_1^4\theta_{j,\ell,1}^2, &i=1.
	\end{aligned}
	\right.
	\end{equation}
	\begin{equation}\label{covlim2}
	\cov\left(\left[\fu_{j,1}^T\bSig_i^{\frac{1}{2}}\fz\right]^2, \left[\fu_{\ell,1}^T\bSig_i^{\frac{1}{2}}\fz\right]^2\right)=
	2\left(\fu_{j,1}^T\bSig_i\fu_{\ell,1}\right)^2\stackrel{a.s.}{\longrightarrow}
	\left\{
	\begin{aligned}
	&2\sigma_0^4\theta_{j,\ell,0}^2, &i=0;\\
	&0, &i=1.
	\end{aligned}
	\right.
	\end{equation}
	and 
	\begin{equation}\label{covlim3}
	\cov\left(\left[\fu_{j,0}^T\bSig_i^{\frac{1}{2}}\fz\right]^2, \left[\fu_{\ell,1}^T\bSig_i^{\frac{1}{2}}\fz\right]^2\right)=
	2\left(\fu_{j,0}^T\bSig_i\fu_{\ell,1}\right)^2\stackrel{a.s.}{\longrightarrow}
	\left\{
	\begin{aligned}
	&2\sigma_0^4(1+\lambda_{j,0})^2a_{j,0}a_{\ell,1}\psi_{j,\ell,0,1}^2, &i=0;\\
	&2\sigma_1^4(1+\lambda_{\ell,1})^2a_{j,0}a_{\ell,1}\psi_{j,\ell,0,1}^2, &i=1.
	\end{aligned}
	\right.
	\end{equation}

	Combining \eqref{varv0:1}, \eqref{varv0:2}, \eqref{covlim1}, \eqref{covlim2} and \eqref{covlim3}, we obtain
	
	\begin{equation}\label{vark:lim}
	\var(\kappa_i)-\bar{\varrho}_i(\gamma)\stackrel{a . s .}{\longrightarrow} 0,\quad 
	\end{equation}
	
	\begin{equation*}
	\begin{aligned}
	\bar{\varrho}_i(\gamma)&=2\sum\limits_{j\in{\mathbb{I}_{2,0}}}\left(\gamma_{j,0}^{(2)}\right)^2\left[\tilde{i}(1+\lambda_{j,0}a_{j,0})+i\frac{\sigma_1^4}{\sigma_0^4}\phi_{j,0}^2\right]+2\sum\limits_{j\in{\mathbb{I}_{1,0}}}\left(\gamma_{j,0}^{(1)}\right)^2\left[\tilde{i}(1+\lambda_{j,0}a_{j,0})+i\frac{\sigma_1^4}{\sigma_0^4}\phi_{j,0}^2\right]\\
	&+2\sum\limits_{j\in{\mathbb{I}_{2,1}}}\left(\gamma_{j,1}^{(2)}\right)^2\left[\tilde{i}\frac{\sigma_0^4}{\sigma_1^4}\phi_{j,1}^2+i(1+\lambda_{j,1}a_{j,1})^2\right]+2\sum\limits_{j\in{\mathbb{I}_{1,1}}}\left(\gamma_{j,1}^{(1)}\right)^2\left[\tilde{i}\frac{\sigma_0^4}{\sigma_1^4}\phi_{j,1}^2+i(1+\lambda_{j,1}a_{j,1})^2\right]\\
	&+4\frac{\sigma_i^4}{\sigma_{\tilde{i}}^4}\sum\limits_{j\neq{\ell}\in{\mathbb{I}_{2,\tilde{i}}}}\gamma_{j,\tilde{i}}^{(2)}\gamma_{\ell,\tilde{i}}^{(2)}\theta_{j,\ell,i}^2+4\frac{\sigma_i^4}{\sigma_{\tilde{i}}^4}\sum\limits_{j\neq{\ell}\in{\mathbb{I}_{1,\tilde{i}}}}\gamma_{j,\tilde{i}}^{(1)}\gamma_{\ell,\tilde{i}}^{(1)}\theta_{j,\ell,i}^2+4\frac{\sigma_i^4}{\sigma_{\tilde{i}}^4}\sum\limits_{j\in{\mathbb{I}_{2,\tilde{i}}}}\sum\limits_{\ell\in{\mathbb{I}_{1,\tilde{i}}}}\gamma_{j,\tilde{i}}^{(2)}\gamma_{\ell,\tilde{i}}^{(1)}\theta_{j,\ell,i}^2\\
	&-4\frac{\sigma_i^2}{\sigma_{\tilde{i}}^2}\sum\limits_{j\in{\mathbb{I}_{2,0}}}\sum\limits_{\ell\in{\mathbb{I}_{2,1}}}\gamma_{j,0}^{(2)}\gamma_{\ell,1}^{(2)}a_{j,0}a_{\ell,1}(1+\tilde{i}\lambda_{j,0}+i\lambda_{\ell,1})^2\psi_{j,\ell,0,1}^2\\
	&-4\frac{\sigma_1^2}{\sigma_0^2}\sum\limits_{j\in{\mathbb{I}_{2,0}}}\sum\limits_{\ell\in{\mathbb{I}_{1,1}}}\gamma_{j,0}^{(2)}\gamma_{\ell,1}^{(1)}a_{j,0}a_{\ell,1}(1+\tilde{i}\lambda_{j,0}+i\lambda_{\ell,1})^2\psi_{j,\ell,0,1}^2\\
	&-4\frac{\sigma_1^2}{\sigma_0^2}\sum\limits_{j\in{\mathbb{I}_{1,0}}}\sum\limits_{\ell\in{\mathbb{I}_{2,1}}}\gamma_{j,0}^{(1)}\gamma_{\ell,1}^{(2)}a_{j,0}a_{\ell,1}(1+\tilde{i}\lambda_{j,0}+i\lambda_{\ell,1})^2\psi_{j,\ell,0,1}^2\\
	&-4\frac{\sigma_1^2}{\sigma_0^2}\sum\limits_{j\in{\mathbb{I}_{1,0}}}\sum\limits_{\ell\in{\mathbb{I}_{1,1}}}\gamma_{j,0}^{(1)}\gamma_{\ell,1}^{(1)}a_{j,0}a_{\ell,1}(1+\tilde{i}\lambda_{j,0}+i\lambda_{\ell,1})^2\psi_{j,\ell,0,1}^2.
	\end{aligned}
	\end{equation*}

	Now let's deal with $\var(\fy_i^T\fz)$, which can be written as follows:
	\begin{equation*}
	\begin{aligned}
	&\var(\fy_i^T\fz)=\mathbb{E}\fy_i^T\fz\fz^T\fy_i=\fy_i^T\fy_i\\
	&= \left(\widetilde{\fH}_1^{-1}(\tilde{\bmu}_{i,1} -\frac{\bSig_1^{\frac{1}{2}}\fZ_1\mathbf{1}_{n1}}{n_1})-\widetilde{\fH}_{0}^{-1}(\tilde{\bmu}_{i,0}-\frac{\bSig_0^{\frac{1}{2}}\fZ_0\mathbf{1}_{n_0}}{n_0} )\right)^{T} \bSig_{i} \left(\widetilde{\fH}_1^{-1}(\tilde{\bmu}_{i,1}-\frac{\bSig_1^{\frac{1}{2}}\fZ_1\mathbf{1}_{n1}}{n_1})-\widetilde{\fH}_{0}^{-1}(\tilde{\bmu}_{i,0}-\frac{\bSig_0^{\frac{1}{2}}\fZ_0\mathbf{1}_{n_0}}{n_0} )\right),
	\end{aligned}
	\end{equation*}
	where $\tilde{\bmu}_{i,1}=\bmu_i-\bmu_1$, $\tilde{\bmu}_{i,0}=\bmu_i-\bmu_0$, for special case $\tilde{\bmu}_{1,1}=\tilde{\bmu}_{0,0}=0$ and $\tilde{\bmu}_{0,1}=-\bmu$, $\tilde{\bmu}_{1,0}=\bmu$.
	With the same arguments for \eqref{xi:1}, the following convergence holds:
	\begin{equation*}\label{var:y1}
	\begin{aligned}
	&\frac{1}{n_1}\bmu^T\widetilde{\fH}_1^{-1}\bSig_i\widetilde{\fH}_1^{-1}\bSig_1^{\frac{1}{2}}\fZ_1\mathbf{1}_{n_1}\stackrel{a . s .}{\longrightarrow} 0,\quad \frac{1}{n_0}\bmu^T\widetilde{\fH}_1^{-1}\bSig_i\widetilde{\fH}_0^{-1}\bSig_0^{\frac{1}{2}}\fZ_0\mathbf{1}_{n_0}\stackrel{a . s .}{\longrightarrow} 0, \quad i=0, 1;\\
	&\frac{1}{n_1}\bmu^T\widetilde{\fH}_0^{-1}\bSig_i\widetilde{\fH}_1^{-1}\bSig_1^{\frac{1}{2}}\fZ_1\mathbf{1}_{n_1}\stackrel{a . s .}{\longrightarrow} 0,\quad \frac{1}{n_0}\bmu^T\widetilde{\fH}_0^{-1}\bSig_i\widetilde{\fH}_0^{-1}\bSig_0^{\frac{1}{2}}\fZ_0\mathbf{1}_{n_0}\stackrel{a . s .}{\longrightarrow} 0, \quad i=0, 1.
	\end{aligned}
	\end{equation*}
	Since $\bSig_{0}^{\frac{1}{2}}\fZ_0$ is independent of  $\bSig_1^{\frac{1}{2}}\fZ_1$,
	\begin{equation*}\label{var:y2}
	\frac{1}{n_0n_1} \mathbf{1}_{n_{0}}^{T} \fZ_{0}^{T}\bSig_0^{\frac{1}{2}} \widetilde{\fH}_{0}^{-1}\bSig_i \widetilde{\fH}_{1}^{-1}\bSig_{1}^{\frac{1}{2}}\fZ_{1}  \mathbf{1}_{n_{1}} \stackrel{\text { a.s. }}{\longrightarrow} 0, \quad i=0,1.
	\end{equation*}
	Therefore,
	\begin{equation}
	\var(\fy_i^T\fz)=\bmu^T\widetilde{\fH}_{\tilde{i}}^{-1}\bSig_i\widetilde{\fH}_{\tilde{i}}^{-1}\bmu+\frac{1}{n_{1}^{2}} \mathbf{1}_{n_{1}}^{T} \fZ_1^T\bSig_1^{\frac{1}{2}} \widetilde{\fH}_{1}^{-1} \bSig_{i}\widetilde{\fH}_{1}^{-1}\bSig_1^{\frac{1}{2}}\fZ_1 \mathbf{1}_{n_{1}}+\frac{1}{n_{0}^{2}} \mathbf{1}_{n_{0}}^{T} \fZ_0^T\bSig_0^{\frac{1}{2}} \widetilde{\fH}_{0}^{-1} \bSig_{i}\widetilde{\fH}_{0}^{-1}\bSig_0^{\frac{1}{2}}\fZ_0 \mathbf{1}_{n_{0}}+o_{\text{a.s.}}(1).
	\end{equation}
	While from Theorem 3.4 in \cite{2011Random} , we can derive:
	\begin{eqnarray}
	&\frac{1}{n_{1}^{2}} \mathbf{1}_{n_{1}}^{T} \fZ_1^T\bSig_1^{\frac{1}{2}} \widetilde{\fH}_{1}^{-1} \bSig_{i}\widetilde{\fH}_{1}^{-1}\bSig_1^{\frac{1}{2}}\fZ_1 \mathbf{1}_{n_{1}}-\frac{1}{n_{1}} \tr \bSig_{1} \widetilde{\fH}_1^{-1} \bSig_{i} \widetilde{\fH}_{1}^{-1} \stackrel{\text { a.s. }}{\longrightarrow} 0,\\
	&\frac{1}{n_{0}^{2}} \mathbf{1}_{n_{0}}^{T} \fZ_0^T\bSig_0^{\frac{1}{2}} \widetilde{\fH}_{0}^{-1} \bSig_{i}\widetilde{\fH}_{0}^{-1}\bSig_0^{\frac{1}{2}}\fZ_0 \mathbf{1}_{n_{0}}-\frac{1}{n_{0}} \tr \bSig_{0} \widetilde{\fH}_0^{-1} \bSig_{i} \widetilde{\fH}_{0}^{-1} \stackrel{\text { a.s. }}{\longrightarrow} 0,
	\end{eqnarray}
	Replacing $\bSig_i$ and $\widetilde{\fH}_i^{-1}$ by their expressions with the fact that $r_i$ is finite, we get 
	\begin{eqnarray}\label{vary:1}
	&\frac{1}{n_{1}^{2}} \mathbf{1}_{n_{1}}^{T} \fZ_1^T\bSig_1^{\frac{1}{2}} \widetilde{\fH}_{1}^{-1} \bSig_{i}\widetilde{\fH}_{1}^{-1}\bSig_1^{\frac{1}{2}}\fZ_1 \mathbf{1}_{n_{1}}-c_1\frac{\sigma_i^2}{\sigma_1^2}\stackrel{\text { a.s. }}{\longrightarrow} 0,\\
	&\frac{1}{n_{0}^{2}} \mathbf{1}_{n_{0}}^{T} \fZ_0^T\bSig_0^{\frac{1}{2}} \widetilde{\fH}_{0}^{-1} \bSig_{i}\widetilde{\fH}_{0}^{-1}\bSig_0^{\frac{1}{2}}\fZ_0 \mathbf{1}_{n_{0}}-c_0\frac{\sigma_i^2}{\sigma_0^2} \stackrel{\text { a.s. }}{\longrightarrow} 0,
	\end{eqnarray}
	
	Next, we deal with $\bmu^T\widetilde{\fH}_{\tilde{i}}^{-1}\bSig_i\widetilde{\fH}_{\tilde{i}}^{-1}\bmu$. According to \eqref{limits}, after some routine calculations, we get
	
	\begin{equation}\label{vary:2}
	\bmu^T\widetilde{\fH}_{\tilde{i}}^{-1}\bSig_i\widetilde{\fH}_{\tilde{i}}^{-1}\bmu-\varpi_i\stackrel{\text { a.s. }}{\longrightarrow} 0,
	\end{equation}
	where
	\begin{equation*}
	\begin{aligned}
	\varpi_i=&\alpha_{\tilde{i}}\frac{\sigma_i^2}{\sigma_{\tilde{i}}^2}\left[1+\sum\limits_{k\in{\mathbb{I}_i}}\lambda_{k,i}b_{k,i}-2\sum\limits_{j\in{\mathbb{I}_{2,\tilde{i}}}}\gamma_{j,\tilde{i}}^{(2)}a_{j,\tilde{i}}b_{j,\tilde{i}}-2\sum\limits_{j\in{\mathbb{I}_{1,\tilde{i}}}}\gamma_{j,\tilde{i}}^{(1)}a_{j,\tilde{i}}b_{j,\tilde{i}}\right.\\
	&-2\sum\limits_{j\in{\mathbb{I}_{2,\tilde{i}}}}\sum\limits_{k\in{\mathbb{I}_i}}\gamma_{j,\tilde{i}}^{(2)}\lambda_{k,i}a_{j,\tilde{i}}\sqrt{b_{j,\tilde{i}}b_{k,i}\psi_{j,k,\tilde{i},i}^2}-2\sum\limits_{j\in{\mathbb{I}_{1,\tilde{i}}}}\sum\limits_{k\in{\mathbb{I}_i}}\gamma_{j,\tilde{i}}^{(1)}\lambda_{k,i}a_{j,\tilde{i}}\sqrt{b_{j,\tilde{i}}b_{k,i}\psi_{j,k,\tilde{i},i}^2}\\
	&+\sum\limits_{j\in{\mathbb{I}_{2,\tilde{i}}}}\left(\gamma_{j,\tilde{i}}^{(2)}\right)^2a_{j,\tilde{i}}b_{j,\tilde{i}}+\sum\limits_{j\in{\mathbb{I}_{1,\tilde{i}}}}\left(\gamma_{j,\tilde{i}}^{(1)}\right)^2a_{j,\tilde{i}}b_{j,\tilde{i}}\\
	&+\sum\limits_{j,\ell\in{\mathbb{I}_{2,\tilde{i}}}}\sum\limits_{k\in{\mathbb{I}_i}}\gamma_{j,\tilde{i}}^{(2)}\gamma_{\ell,\tilde{i}}^{(2)}\lambda_{k,i}a_{j,\tilde{i}}a_{\ell,\tilde{i}}\sqrt{b_{j,\tilde{i}}b_{\ell,\tilde{i}}\psi_{j,k,\tilde{i},i}^2\psi_{\ell,k,\tilde{i},i}^2}\\
	&+\sum\limits_{j,\ell\in{\mathbb{I}_{1,\tilde{i}}}}\sum\limits_{k\in{\mathbb{I}_i}}\gamma_{j,\tilde{i}}^{(1)}\gamma_{\ell,\tilde{i}}^{(1)}\lambda_{k,i}a_{j,\tilde{i}}a_{\ell,\tilde{i}}\sqrt{b_{j,\tilde{i}}b_{\ell,\tilde{i}}\psi_{j,k,\tilde{i},i}^2\psi_{\ell,k,\tilde{i},i}^2}\\
	&\left.
	+2\sum\limits_{j\in{\mathbb{I}_{2,\tilde{i}}}}\sum\limits_{\ell\in{\mathbb{I}_{1,\tilde{i}}}}\sum\limits_{k\in{\mathbb{I}_i}}\gamma_{j,\tilde{i}}^{(2)}\gamma_{\ell,\tilde{i}}^{(1)}\lambda_{k,i}a_{j,\tilde{i}}a_{\ell,\tilde{i}}\sqrt{b_{j,\tilde{i}}b_{\ell,\tilde{i}}\psi_{j,k,\tilde{i},i}^2\psi_{\ell,k,\tilde{i},i}^2}
	\right],
	\end{aligned}
	\end{equation*}
	Therefore, according to \eqref{vary:1} and \eqref{vary:2}, we obtain
	\begin{equation}\label{vary}
	\var(y_i^Tz)-\left(c_1\frac{\sigma_i^2}{\sigma_1^2}+c_0\frac{\sigma_i^2}{\sigma_0^2}+\varpi_i\right)\stackrel{\text { a.s. }}{\longrightarrow} 0	.
	\end{equation}
	What's more, combining with \eqref{var:total} and \eqref{vark:lim}, we get the convergence of $\vartheta_i(\gamma)$ as below: $$\vartheta_0(\gamma)-\bar{\vartheta}_0(\gamma)\stackrel{\text { a.s. }}{\longrightarrow} 0,\quad \vartheta_1(\gamma)-\bar{\vartheta}_1(\gamma)\stackrel{\text { a.s. }}{\longrightarrow} 0,$$
	where $\bar{\vartheta}_0(\gamma)=\bar{\varrho}_0(\gamma)+4\left(c_1\frac{\sigma_0^2}{\sigma_1^2}+c_0+\varpi_0\right)$, $\bar{\vartheta}_1(\gamma)=\bar{\varrho}_1(\gamma)+4\left(c_1+c_0\frac{\sigma_1^2}{\sigma_0^2}+\varpi_0\right)$.
	Putting all these results together and rewriting them in vector form yields the convergence of the variance $\vartheta_0(\gamma)$.
\end{proof}

\begin{proof}[Proof of Theorem \ref{th4}]\label{proof_th4}
	Let $\omega_{1,i}=\frac{\gamma_{1,i}\lambda_{1,i}}{1+\lambda_{1,i}\gamma_{1,i}}$, $\omega_{2,i}=\frac{-\gamma_{2,i}\lambda_{-1,i}}{1-\lambda_{-1,i}\gamma_{2,i}}$, $i\in{\{0,1\}}$, easy to verify that $\omega_{1,i}, \omega_{2,i}\in{(0,1)}$. Then $\gamma_{1,i}$ and $\gamma_{2,i}$ can be rewritten as $$\gamma_{1,i}=\frac{\omega_{1,i}}{\lambda_{1,i}(1-\omega_{1,i})}\quad \text{and} \quad \gamma_{2,i}=\frac{-\omega_{2,i}}{\lambda_{-1,0}(1-\omega_{2,i})}.$$ Further,
	\begin{equation}\label{w:ga1}
	\gamma_{j,i}^{(1)}=\frac{\omega_{1,i}\lambda_{j,i}}{\lambda_{1,i}(1-\omega_{1,i})+\omega_{1,i}\lambda_{j,i}}, j\in{\mathbb{I}_{1,i}}
	\end{equation}
	\begin{equation}\label{w:ga2}
	\gamma_{j,i}^{(2)}=\frac{-\omega_{2,i}\lambda_{j,i}}{\lambda_{-1,i}(1-\omega_{2,i})-\omega_{2,i}\lambda_{j,i}}, j\in{\mathbb{I}_{2,i}} 
	\end{equation}
	Plugging \eqref{w:ga1} and \eqref{w:ga2} into \eqref{barm}and \eqref{barV}, we have $\widetilde{m}(\omega)=\overline{m}(\gamma)$ and $\widetilde{\vartheta}(\omega)=\overline{m}(\gamma).$
	Thus the objection function can be written as 
	$$\widetilde{\rho}(\omega)=\frac{|\widetilde{m}_0(\omega)-\widetilde{m}_1(\omega)|}{\sqrt{\widetilde{\vartheta}_0(\omega)+\widetilde{\vartheta}_1(\omega)}},$$
	therefore, \eqref{obj:ga} is equivalent to 
	$$\omega_{1,0}^*, \omega_{2,0}^*, \omega_{1,1}^*, \omega_{2,1}^*={\arg\max}_{\omega_{1,0},\omega_{2,0},\omega_{1,1},\omega_{2,1}\in{(0,1)}}\widetilde{\rho}(\omega_{1,0},\omega_{2,0},\omega_{1,1},\omega_{2,1}),$$
	by which
	$$\gamma_{2,0}^*=\frac{-\omega_{2,0}^*}{\lambda_{-1,0}(1-\omega_{2,0}^*)} \quad \text{and} \quad \gamma_{1,0}^*=\frac{\omega_{1,0}^*}{\lambda_{1,0}(1-\omega_{1,0}^*)},$$
	$$\gamma_{2,1}^*=\frac{-\omega_{2,1}^*}{\lambda_{-1,1}(1-\omega_{2,1}^*)} \quad \text{and} \quad \gamma_{1,1}^*=\frac{\omega_{1,1}^*}{\lambda_{1,1}(1-\omega_{1,1}^*)}.$$
\end{proof}

\begin{proof}[Proof of Theorem \ref{thm:5}]\label{proof_th5}
	Since $\hat{\sigma}_i^{2}=\frac{1}{p-(r_{1,i}+r_{2,i})}\left(\sum_{j=1}^{p}l_{j,i}-\sum_{j=1}^{r_{1,i}}l_{j,i}-\sum_{j=-r_{2,i}}^{-1}l_{j,i} \right), $ then $ (p-(r_{1,i}+r_{2,i}))\hat{\sigma}_i^{2}=\sum_{j=1}^{p}l_{j,i}-\sum_{j=1}^{r_{1,i}}l_{j,i}-\sum_{j=-r_{2,i}}^{-1}l_{j,i}. $ For the first term $ \sum_{j=1}^{p}l_{j,i}, $ by \cite{bai04}, we have 
	\begin{align*}
	\sum_{j=1}^{p}l_{j,i}-p\int xdF^{J_{i},H_{n}}\stackrel{\mathcal{D}}{\rightarrow} \mathcal{N}\left(0, 2J_i\sigma^{4}\right).
	\end{align*}
	Furthermore, by \cite{Bai10},
	\begin{align*}
	&p\int xdF^{J_{i},H_{n}}=p\int tdH_{n}(t)=(p-(r_{1,i}+r_{2,i}))\sigma_i^{2}+\sigma_i^{2}\sum_{j=1}^{r_{1,i}}\left(\frac{\lambda_{j,i}}{\sigma_i^{2}}+1 \right) +\sigma_i^{2}\sum_{j=-r_{2,i}}^{-1}\left(\frac{\lambda_{j,i}}{\sigma_i^{2}}+1 \right) \\&=p\sigma_i^{2}+\sum_{j=1}^{r_{1,i}}\lambda_{j,i}+\sum_{j=-r_{2,i}}^{-1}\lambda_{j,i},
	\end{align*}
	then we have 
	\begin{align}
	\sum_{j=1}^{p}l_{j,i}-p\sigma_i^{2}-\sum_{j=1}^{r_{1,i}}\lambda_{j,i}-\sum_{j=-r_{2,i}}^{-1}\lambda_{j,i}\stackrel{\mathcal{D}}{\rightarrow} \mathcal{N}\left(0, 2J_i\sigma^{4}\right).\label{all}
	\end{align} 
	For term $ \sum_{j=1}^{r_{1,i}}l_{j,i} $, $\sum_{j=-r_{2,i}}^{-1}l_{j,i}, $ since 
	\begin{align*}
	l_{j,i}\rightarrow\sigma_i^{2}\phi\left(  \frac{\lambda_{j,i}}{\sigma_i^2}+1 \right) =\lambda_{j,i}+\sigma_i^{2}+\sigma_i^{2}J_i\left(1+\frac{\sigma_i^{2}}{\lambda_{j,i}} \right)  ~~~\text{almost surely},
	\end{align*}	
	then we have
	\begin{align}
	\sum_{j=1}^{r_{1,i}}l_{j,i}+\sum_{j=-r_{2,i}}^{-1}l_{j,i}\rightarrow\sum_{j=1}^{r_{1,i}}\left(  \lambda_{j}+\sigma_i^{2}+\sigma_i^{2}J_i\left(1+\frac{\sigma_i^{2}}{\lambda_{j}} \right)\right) +\sum_{j=-r_{2,i}}^{-1}\left(  \lambda_{j}+\sigma_i^{2}+\sigma_i^{2}J_i\left(1+\frac{\sigma_i^{2}}{\lambda_{j}} \right)\right)   ~~~\text{almost surely}.\label{spike}
	\end{align}
	By results (\ref{all}) and (\ref{spike}) and Slutsky's lemma, we have
	\begin{align*}
	\frac{p-(r_{1,i}+r_{2,i})}{\sigma_i^{2}\sqrt{2J_i}}\left( \hat{\sigma}_i^{2}-\sigma_i^{2}\right) +\sqrt{\frac{J_i}{2}}\left((r_{1,i}+r_{2,i})+\sigma_i^{2}\left(\sum_{j=1}^{r_{1,i}}\frac{1}{\lambda_{j,i}}+\sum_{i=-r_{2,i}}^{-1}\frac{1}{\lambda_{j,i}} \right)  \right) \stackrel{\mathcal{D}}{\rightarrow} \mathcal{N}\left(0, 1\right),
	\end{align*}
	then the proof is complete.
\end{proof}



%
 
%

\bibliographystyle{IEEEtran}
\bibliography{trial}
%
%
%
%
%
%
%
%
%

\end{document}